\pgfplotsset{compat=1.18}
\algrenewcommand\algorithmicrequire{\textbf{Input:}}
\algrenewcommand\algorithmicensure{\textbf{Output:}}
\newtheorem{theorem}{Theorem}
\newtheorem{lemma}{Lemma} 
\newtheorem{definition}{Definition} 
\newtheorem{remark}{Remark}
\definecolor{refcolor}{RGB}{215, 25, 28}
\definecolor{highlight}{RGB}{25, 215, 28}
\let\NAT@parse\undefined
\crefname{line}{line}{lines}
\crefname{figure}{Fig.}{Figs.}
\Crefname{figure}{Fig.}{Figs.}
\crefname{equation}{Eq.}{Eqs.}
\Crefname{equation}{Eq.}{Eqs.}
\crefname{section}{Sec.}{Secs.}
\Crefname{section}{Sec.}{Secs.}
\crefname{definition}{Def.}{Defs.}
\Crefname{definition}{Def.}{Defs.}
\crefname{algorithm}{Alg.}{Algs.}
\Crefname{algorithm}{Alg.}{Algs.}
\crefname{assumption}{Asm.}{Asms.}
\Crefname{assumption}{Asm.}{Asms.}
\crefname{theoren}{Thm.}{Thms.}
\Crefname{theorem}{Thm.}{Thms.}
\crefname{subassumption}{Asm.}{Asms.}
\Crefname{subassumption}{Asm.}{Asms.}
\crefname{table}{Tab.}{Tabs.}
\Crefname{table}{Tab.}{Tabs.}
\newcommand{\methodname}{StAC\xspace}
\newcommand{\expandmethodname}{Scheduling to Avoid Collisions\xspace}
\newcommand{\Weihang}[1]{{\color{black}#1}}
\begin{document}

%
\title{Efficient Multi-Robot Motion Planning for Manifold-Constrained Manipulators by Randomized Scheduling and Informed Path Generation}

\author{Weihang Guo, Zachary Kingston, Kaiyu Hang, and Lydia E. Kavraki%
\thanks{Manuscript received: September 19, 2025; Revised November 9, 2025; Accepted January 5, 2026.}
\thanks{This paper was recommended for publication by
Editor Olivier Stasse upon evaluation of the Associate Editor and Reviewers comments. 
This work was supported in part by NSF CCF 2336612 and Enrich funds at Rice University.} 
\thanks{WG, ZK, KH, and LEK are affiliated with the Department of Computer Science, Rice University, Houston TX, USA {\tt\small \{wg25, zak, kaiyu.hang, kavraki\}@rice.edu}. LEK and KH are also affiliated with the Ken Kennedy Institute at Rice University. ZK is now affiliated with the Department of Computer Science, Purdue University, West Lafayette IN, USA {\tt\small zkingston@purdue.edu}.}%
\thanks{Digital Object Identifier (DOI): see top of this page.}
}

\markboth{IEEE Robotics and Automation Letters. Preprint Version. Accepted January, 2026}
{Guo \MakeLowercase{\textit{et al.}}: StAC: Efficient Multi-Robot Motion Planning for Manifold-Constrained Manipulators} 

\maketitle 

\begin{abstract}

Multi-robot motion planning for high degree-of-freedom manipulators in shared, constrained, and narrow spaces is a complex problem and essential for many scenarios such as construction, surgery, and more. Traditional coupled methods plan directly in the composite configuration space, which scales poorly; decoupled methods, on the other hand, plan separately for each robot but lack completeness. Hybrid methods that obtain paths from individual robots together require the enumeration of many paths before they can find valid composite solutions.
This paper introduces \expandmethodname~(\methodname), a hybrid approach that more effectively composes paths from individual robots by \emph{scheduling} (adding stops and coordination motion along all paths) and generates paths that are likely to be feasible by using bidirectional feedback between the scheduler and motion planner for informed sampling.
\methodname uses 10 to 100 times fewer paths from the low-level planner than state-of-the-art hybrid baselines on challenging problems in manipulator cases.

\end{abstract}

\begin{IEEEkeywords}
Multi-Robot Systems, Constrained Motion Planning, Motion and Path Planning, Collision Avoidance
\end{IEEEkeywords}

\IEEEpeerreviewmaketitle

\section{Introduction}

Multi-robot systems are essential for solving tasks beyond the capabilities of a single robot~\cite{khamis2015multirobot}.
Multi-Robot Motion Planning (MRMP) finds continuous, collision-free paths for multiple robots, considering collisions not just with obstacles but also between moving robots. 
In MRMP, the approaches can be categorized into three main types: decoupled, coupled, and hybrid methods. Decoupled methods, such as those based on velocity obstacles~\cite{van2011reciprocal} or priority-based planning~\cite{van2005prioritized}, are primarily applied to mobile and aerial robots. However, they are ill-suited for high-DoF manipulators operating in constrained, cluttered environments, where the mapping from workspace to configuration space is high-dimensional and complex, and the configuration space contains many narrow passages. Consequently, they often face deadlocks and are inefficient in capturing the required coordination. 
This paper focuses on the constrained and cluttered scenarios, and we consider robotic arms with manifold constraints, which limit their motion to a surface, such as a planar work surface~(see~\cref{fig:arm_doorway}). In such complex cases, coupled and hybrid methods are more widely used due to their ability to handle the intricate coordination required.

\begin{figure}[!ht]
    \centering
    \includegraphics[width=0.7\linewidth]{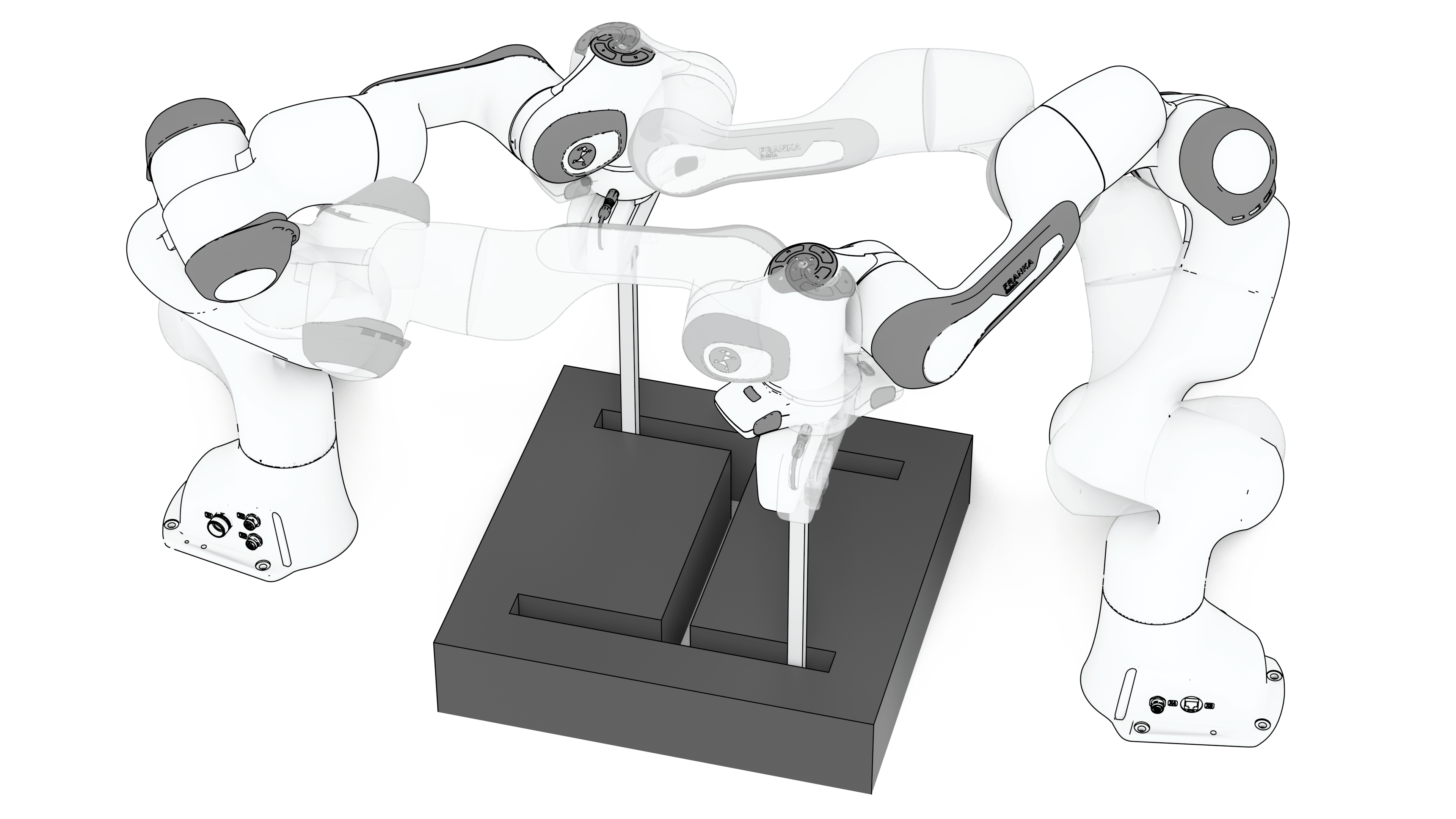}
    \caption{We extend the classic planar doorway problem to high-DoF manipulators. The end effectors of the two arms are constrained to a manifold on the same plane. In this scenario, two robots must swap positions of their end effector by navigating a narrow passage, which forces one to wait. As shown in \cref{sce:doorway}, the \methodname algorithm solves this challenge in an average of 5 seconds, whereas the state-of-the-art hybrid method fails to find a solution within 60 seconds.
}
    \label{fig:arm_doorway}
\end{figure}

Coupled methods plan directly in the composite configuration space by modeling the multi-robot system as a single entity. Algorithms such as dRRT~\cite{Solovey2013FindingAN} extend the principles of classic RRT~\cite{kuffner2000rrtconnect}, sampling composite configurations that jointly encode the states of all robots. dRRT often scales poorly, making it unsuitable for complex scenarios involving high-DoF manipulators in confined spaces~(see~\ref{sec:experiment}). 
Hybrid approaches~\cite{cbs, solis_vidana_representation-optimal_2021} combine the strengths of coupled and decoupled methods by using a low-level planner to generate paths individually and a high-level planner to schedule them. 
The process typically involves two main components. The first component is \emph{planning}: each robot plans a path from its start to its goal, avoiding obstacles and ignoring other robots. The second component is \emph{scheduling}: based on these planned trajectories, the execution is scheduled by adjusting velocities or introducing pauses to ensure that no inter-robot collisions occur during motion.

A less-explored aspect of hybrid approaches is effective scheduling. Methods such as CBSMP~\cite{solis_vidana_representation-optimal_2021} emphasize enumerating paths over exploring alternative schedules. \Weihang{CBSMP, for example, checks collisions only under a fixed schedule (typically simultaneous motion) and, upon conflict, immediately regenerates paths. This neglects many viable schedules, such as allowing one robot to move first, which could resolve conflicts without re-planning the individual robot trajectory.} In constrained settings like~\cref{fig:arm_doorway}, where only one arm can pass through a narrow doorway at a time, simultaneous motion forces long detours that may be infeasible. 
By contrast, effective scheduling increases the likelihood of finding solutions and offers richer feedback to low-level planners.

To this end, we introduce the \expandmethodname (\methodname) algorithm, a probabilistically complete hybrid approach designed to emphasize and explore the importance of scheduling in MRMP. This method is specifically tailored to high-DoF robots operating in manifold-constrained, narrow, cluttered, and shared workspaces.
\methodname's high-level scheduler generates randomized schedules that coordinate the robot's motions to avoid collisions for paths generated by individual path planners.
If the high-level planner cannot schedule a valid solution, collision information from all schedules is given to the low-level planners, who use this feedback to plan alternative paths. %
Our empirical results demonstrate that \methodname requires 10 to 100 times fewer paths from the low-level planner to find a solution than CBSMP, significantly reducing planning times in highly constrained scenarios where state-of-the-art hybrid methods fail to solve even once.

\section{Preliminaries and Related Work}

The Multi-Robot Motion Planning (MRMP) problem involves determining feasible continuous paths for a set of robots \( A = \{a_1, \cdots, a_I\}\) operating in an environment \( W \).
The composite state space of all robot configurations is known as the configuration space $\mathbb{C}$, the Cartesian product of all individual robot configuration spaces $\mathbb{C}_1 \times \cdots \times \mathbb{C}_I$.
Here, $\mathbb{C}_i$ represents the configuration space for robot \( a_i \in A \).

We are interested in the case where each robot is \emph{task constrained}, e.g., they must keep their end-effector on a planar work surface.
We represent these constraints as \emph{manifold constraints} and use notation from~\cite{kingston2019exploring-implicit-spaces-for-constrained}: the constraint imposed on robot $a_i$ is defined as the function $f_i : \mathbb{C}_i \rightarrow \mathbb{R}^k$, which results in the implicit submanifold $\mathbb{M}_i \subset \mathbb{C}_i$ where $f_i(q) = \mathbf{0}, q \in \mathbb{M}_i$.
A review of methods for planning the motion of a single manifold-constrained robot is given in~\cite{kingston2018ar}.
The composite manifold of all constraint-satisfying configurations is given as $\mathbb{M} = \mathbb{M}_1 \times \cdots \times \mathbb{M}_I$.

The path of \( a_i \) is a continuous motion within a subset of $\mathbb{M}_i$ called the free space of \( a_i \), denoted as $\mathbb{M}_{free, i}$.
The composite constraint-satisfying free space $\mathbb{M}_{free}\subseteq \mathbb{M}$ consists of configurations in which the robots do not collide with obstacles or pairwise with each other, and satisfy constraints.
Each robot $a_i$ has an initial configuration \( c_{s,i} \) and a set of goals \( G_i \subset \mathbb{M}_{free, i} \).
The objective is to find a continuous path with a schedule \( \rho_i(s): [0, 1] \to \mathbb{M}_{free,i} \) for each robot $a_i$ that transitions it from \( c_{s,i} \) to \( G_i \) without colliding with any obstacles $o\in W$ and all other robots $a_j \in A\backslash\{a_i\}$.
Initially, this path is parameterized linearly over time, meaning $s(t) = \frac{t}{t_{\text{final}}}$ for $t \in [0, t_{\text{final}}]$, where  $t_{\text{final}}$ is the total time allocated for the robot to complete its motion from start to goal. To adjust the timing and coordination among robots, we introduce a scheduling function  $\sigma_i(t): [0, t_{\text{final}}] \to [0, 1]$ that modifies the linear mapping. The actual motion of the robot is then defined as  $\rho_i(t) = p_i(\sigma_i(t))$, where $\sigma_i(t)$ adjusts the progression along the path  $p_i$.

\subsection{Coupled Methods}
Sampling-based motion planners (e.g., RRT~\cite{kuffner2000rrtconnect}, PRM~\cite{508439}) can solve MRMP by treating the multi-robot system as one composite system.
Dedicated methods plan in factored representations, such as dRRT~\cite{Solovey2013FindingAN} and variants~\cite{shome_drrt_2020}, which efficiently sample from individual robots' configuration spaces and search in the composite space. dRRT uses a prioritization rule to schedule the robots to connect two vertices in the tree. SSSP~\cite{okumura2023quick} iteratively builds local search spaces over individual robot roadmaps; here, all robots are scheduled to move simultaneously. The planner in~\cite{mcbeth2023topoguidance} uses workspace topology and convex relaxation to coordinate more robots. However, these methods do not extend directly to high-DoF cases due to the costly workspace to configuration space transformations and the complexity of the composite space.

\subsection{Decoupled Methods}

Decoupled approaches address the exponential complexity of MRMP by decomposing the problem into many single-robot problems, e.g., by treating other robots as velocity obstacles and moving in simultaneously~\cite{van2011reciprocal, arul2021vrvo, pan2020augment_control}. The velocity obstacles are computationally costly to determine in the configuration space of a high-degree-of-freedom manipulator robot. Other algorithms, such as~\cite{van2005prioritized,li2022learningpc, orthey2024multilevel}, schedule robots in a prioritized order, where each robot plans in sequence and treats previously selected robots as dynamic obstacles. \Weihang{Resolving deadlocks in multi-robot motion planning using prioritized methods requires careful design of the prioritization strategy. Prior work~\cite{ma2019searching} has shown that fixed-order prioritization is incomplete for multi-robot planning. Deadlock-free prioritized methods, such as~\cite{ma2019searching, vcap2015prioritized, zhang2024d}, have only been demonstrated for mobile-robot settings, where the mapping between the workspace and configuration space is straightforward and solving the single-robot planning problem is trivial.} Many decoupled algorithms are difficult to generalize to high-DoF cases due to unknown workspace-to-configuration space projections and differing state representations for each manipulator. Many learning-based methods~\cite{li2021GNN, li2022learningpc} also require data and training and do not readily generalize. 

\subsection{Hybrid Methods}

Hybrid methods address MRMP by combining high- and low-level planners. The low-level planner generates paths for individual robots (or subsets), while the high-level planner schedules execution, checks feasibility, and provides conflict feedback to guide re-planning. 
Graph-based methods such as CBS~\cite{cbs} and LaCAM~\cite{okumura2023lacam} iteratively refine paths by adding constraints to avoid collisions, with CBS variants~\cite{barer2014ecbs, lim2022cbsb} improving efficiency to handle hundreds of robots. 
\Weihang{However, CBS and MA-CBS operate on grid-based state representations, whereas our problem is defined in a continuous state space.}
\Weihang{MMD~\cite{shaoul2024multi} incorporates inter-robot conflicts as conditions to the diffusion policy, which requires significant data to train.}

CBSMP~\cite{solis_vidana_representation-optimal_2021} extends CBS to continuous spaces by using roadmaps for low-level planning. Later work~\cite{mcbeth2023topoguidance} outperforms the CBSMP in a planar robot setup, and their topology-guided strategy does not directly extend to a multi-arm setup. We therefore regard CBSMP as the state-of-the-art method for multi-arm planning in narrow passages and adopt it as our baseline. \Weihang{The high-level planner of CBSMP discretizes paths into uniform time segments, advancing all agents simultaneously. If collisions occur, the paths are discarded and replanned. However, many possible schedules exist to time a set of robot paths; for example, consider cars moving on the road, where traffic lights can coordinate the cars by asking them to stop. Unlike \methodname, CBSMP only explores one possible scheduling function of the paths.}

\subsection{Conflict Resolution and Path Scheduling}

Given a set of robot paths, execution can be coordinated by velocity tuning~\cite{sanchez_using_2002, Kant1986TowardET, 988973}, which assigns each robot a velocity profile, or by priority-based search~\cite{van2005prioritized}, which executes robots in a fixed order. Some works~\cite{wiktor2014psw, okumura2023lacam} incorporate stopping actions during path search in \emph{discrete} spaces, but these strategies do not directly extend to continuous settings. In contrast, our scheduling explicitly reorders motion timing across robots, enabling more flexible coordination. The work in~\cite{solis2024arc} similarly reschedules around conflict areas, using priority ordering and reverting to a composite PRM in failure cases. Nevertheless, in high-DoF contexts, subproblems \emph{still} involve planning for multiple manipulators, which remains challenging. The method in~\cite{kazumi2022continuous_prioritized_interval} precomputes collision pairs with continuous time intervals to efficiently find collision-free schedules, but it applies only to 2D roadmaps where neighbor searches are efficient. Finally, \cite{okumura2022ctrm} uses a learning-based scheduling approach, which could also be integrated into the \methodname framework to further improve performance.

\section{\expandmethodname}

In this section, we introduce the core idea of \expandmethodname(\methodname) and elaborate on its details in the following subsections.

\subsection{Core Idea}\label{sec:core_idea}
\begin{algorithm}[!ht]
\caption{The \methodname Algorithm}
\footnotesize
    \begin{algorithmic}[1]
        \Require Set of robots $A$, Maximum reschedule attempts $N_{RA}$, \Weihang{Maximum replan attempts $N_{B}$}
        \Ensure Solution $S$
        \While{time available}
        \State Initial motion plans $P \leftarrow \emptyset$ 
        \For{each robot $a_i \in A$}
            \State $p_i\leftarrow a_i.\Call{QueryPRM}{\null}$ \label{line:ind_mp}
            \State $P\leftarrow P\cup \{p_i\}$
        \EndFor
        \State $n \gets 0$
        \While{$n \leq N_{B}$} \Comment{\Weihang{\cref{sec:retry}}}
            \State Attempt $i \gets 0$; Collision Record $record \leftarrow \emptyset$; $n \gets n+1$
            \While{$i \leq N_{RA}$}\Comment{Iterative schedule same set of paths} \label{line:begin_coord}
                \State Candidate Solution $S^* \gets \Call{Schedule}{P}$ \Comment{\cref{sec:coord}}
                \State Collision Information $x \gets \Call{CollisionCheck}{S^*}$
                \If{$x = \emptyset$}
                    \State \Return $S \gets S^*$ \Comment{Find the valid scheduling}
                \EndIf
                \State $\Call{RecordCollision}{x, record}$; $i \gets i+1$ \label{line:record_collision} \Comment{\cref{sec:feedback}}
            \EndWhile \label{line:end_coord}
            \State $P \gets \emptyset$ \Comment{Cannot find a valid scheduling.}
            \For{each robot $a_i \in A$}
                \State $a_i.\Call{Update}{p_i, record[i]}$\label{line:update_collision} \Comment{Update the feedback to robots}
                \State $p_i \gets a_i.\Call{QueryPRMWithExperience}{\null}$ \Comment{\cref{sec:agent}}
                \State $P\leftarrow P\cup \{p_i\}$
            \EndFor
        \EndWhile
    \EndWhile
    \State \Return $S \gets \emptyset$

    \end{algorithmic}
\label{alg:dCAB}
\end{algorithm}

\methodname consists of the scheduler and the individual low-level motion planner $a_i\in A$ for a set of $A$ robots. The pseudocode of \methodname is given in \cref{alg:dCAB}. Initially, each low-level planner $i$ plans a path $p_i(s)$ from its start to goal in its own manifold-constrained configuration space $\mathbb{M}_i$ by calling $a_i.\Call{QueryPRM}{\null}$, which assumes the robot $i$ is the only robot in the environment. 

\begin{remark}[Individual low-level motion planner]\label{remark:low_level}
Each robot constructs its own projection-based manifold-constrained PRM~\cite{kingston2019exploring-implicit-spaces-for-constrained}, assuming it is the only robot in the environment. A path, $p$, is then obtained as a sequence of vertices in this PRM, with its \textbf{length}, $|p|$, defined by the number of vertices. Throughout this paper, when we refer to the \textbf{path vertex} or \textbf{path edge} of a robot, we specifically mean the corresponding vertex or edge in the PRM.
\end{remark}

Given a set of paths $P$ returned by the all robots' low-level planners, the scheduler computes \emph{schedules} $\sigma_1(t), \ldots, \sigma_A(t)$ for each $p_i \in P$ such that no collisions occur between robots. This process, which we call \textit{coordination space scheduling}~(\cref{sec:coord}). As illustrated in \cref{fig:coord}, $\Call{Schedule}{P}$ first generates a candidate solution $S^*$ by inserting random pauses along individual robot paths and assigning a random priority order for their progression to the next vertex (\cref{sec:coord}). A collision check is then performed on $S^*$. If $S^*$ is collision-free with respect to all obstacles and other robots, it becomes the solution. Otherwise, the scheduler records all collision edges between robots in the \textit{Collision Record} (\cref{line:record_collision}). The scheduler then iteratively reschedules for a new candidate solution using the same set of paths until it reaches the maximum rescheduling attempts ($N_{RA}$) (\cref{sec:feedback}). 

If a valid solution is not found after $N_{RA}$ attempts, the collision counts of all edges stored in the Collision Record are provided to the robots as feedback (\cref{line:update_collision}). Each robot's low-level planner then stores this collision information in a \textit{Collision History}. Based on this experience, each robot plans a different set of paths, $P$, using $\Call{QueryPRMWithExperience}{\null}$, biased towards paths that will avoid collisions by favoring edges that have had fewer collisions with other robots in the past (\cref{sec:agent}).

\subsection{Scheduling in Coordination Space}\label{sec:coord}
\begin{figure}[!ht]
    \centering
    \includegraphics[width=0.8\linewidth]{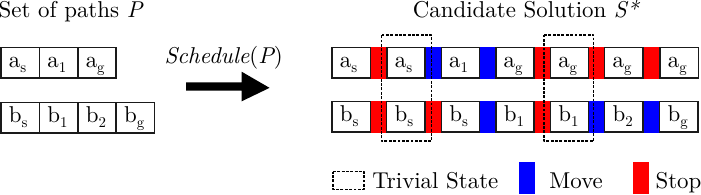}
    \caption{An illustration of \textsc{Schedule}(\textit{P}). Robots $a$ and $b$ are in the environment with their start $a_s, b_s$ and goal $a_g, b_g$. Paths returned by the low-level planners are obtained from the PRM (see~\cref{remark:low_level}). \textsc{Schedule}(\textit{P}) uses a set of individual robot paths to generate a candidate solution. The candidate solution uses the \emph{same set} of vertices as the original paths, but may repeat certain configurations to represent \emph{stops}. To transition from composite state $i$ to $i+1$, each robot will either \textcolor{blue}{move} or \textcolor{red}{stop}. If all robots \textcolor{red}{stop}, then state $i+1$ is \textit{trivial} and will be removed. }
    \label{fig:coord}
\end{figure}

The robot $i$'s path $p_i$ is a sequence of vertices, each corresponding to a robot configuration~(see~\cref{remark:low_level}) in $\mathbb{M}_{free,i}$. All vertices in path $p_i$ form a set $\mathbb{P}_i \subseteq \mathbb{M}_{free, i}$. The scheduler takes the set of paths $P$ from each robot's PRM as input. 
The goal of the scheduler is to find a continuous inter-robot collision-free paths schedule for all robots $\rho_i(t): [0, t_{final}] \rightarrow \mathbb{P}_i$ for $i \in \{1 \ldots I\}$. Thus, given any time $t \in [0, t_{final}]$, $\rho_i(t)$ represents robot $i$'s configuration $c_i \in \mathbb{P}_i$. By doing this, we constrain the range of individual scheduling functions to $\mathbb{P}_i$, which has a dimension of 1. The space of all scheduling functions for the composite system with a given set of robot paths $P$ is called the coordination space of $P$.
We used a sampling-based method to add \emph{random} stops and schedule the paths. \methodname samples stops between the vertices. 
We define $|p_i|$ equal to the number of vertices of path $p_i$. 
\begin{definition}[Candidate solution length]\label{def:sol_length}
Given a set of robot paths $P$, we define the length of a candidate solution as $L := \sum_{p_i \in P} |p_i|$, which equals the total number of vertices across all robot paths.
\end{definition}
We discretize the time into $L$ steps with all robots starting at their initial state at time $t_1 = 0$ and reaching their goal state at  $t_L = t_{\text{final}}$. From $t_i$ to $t_{i+1}$, the robot will either \textit{move} from its current vertex to the next vertex or \textit{stop}. We sample  $L - |p_i|$ stops for each robot to determine when it remains stationary. If all robots stop, then $t_{i+1}$ is a trivial state and will be removed. We call the path with a scheduling function a candidate solution $S^*$. The length of $S^*$, $|S^*|$, equals the number of non-trivial time steps. We provide an example in~\cref{fig:coord}.

\subsection{Collision Record for Robot Feedback}\label{sec:feedback}
\begin{figure}[!ht]
    \centering
    \includegraphics[width=0.8\linewidth]{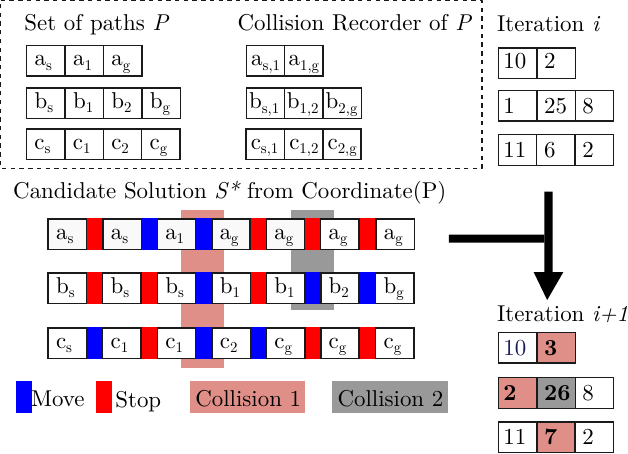}
    \caption{An example of the Collision Record during scheduling three robots. The Collision Record values for iteration $i$ are randomly initialized for illustration purposes. During each iteration, collision checking is performed on the candidate solution, and the counter for any edge involved in a collision is incremented by one. If a robot \textit{stops}, the collision is not counted. For example, a collision between \( a_{g} \) and \( b_{1,2} \) in the grey region only increments the counter for \( b_{1,2} \) from 25 to 26, since robot \( a \) is stopped.}
    
    \label{fig:collision_record}
\end{figure}

For the given set of paths $P$, we attempt to find a valid schedule within $N_{RA}$ times, as shown in~\cref{line:begin_coord}. We use a \emph{Collision Record} to log inter-robot collisions that happen during the $N_{RA}$ attempts. As shown in~\cref{fig:collision_record}, each path maintains a Collision Record with a length that matches the number of edges in the path. Whenever a collision occurs between edges, the counters for both edges are incremented by one. 
\Weihang{Unlike CBS-based planners, the Collision Record does not record the time $t$ at which a collision occurs. It stores only geometric information. This design effectively decouples scheduling from planning.}

\subsection{Motion Planning with Experience}\label{sec:agent}
\begin{figure}[!ht]
\vspace{10px}
    \centering
    \includegraphics[width=0.6\linewidth]{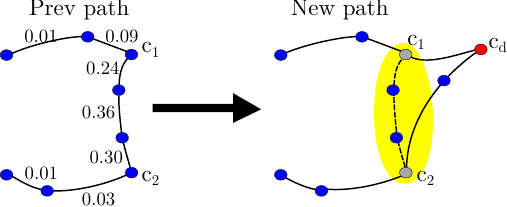}
    \caption{Low-level planners will plan a new path using the previous path in the Collision Record. The number near each edge is the normalized Collision Record of the edge and is randomly initialized for illustration purposes. The yellow region is the collision region determined by the random walk algorithm with $c_1$ and $c_2$ as its end vertices. $c_d$ is the detour configuration. The new path is shown in the solid line.}
    \label{fig:detour}
\end{figure}
If no solution is found by $\Call{Schedule}{P}$ after $N_{RA}$ iterations, \methodname sends the Collision Record back to each low-level planner. Each robot maintains a \emph{Collision History} list to store all paths and their corresponding Collision Records. The low-level planner uses this history as a reference to guide the selection of new paths.
\Weihang{The objective of each robot's low-level planner is to identify areas with a high likelihood of inter-robot collisions using the Collision History and to generate diverse paths to avoid these regions.} 

Specifically, the Collision History is initialized and expanded as follows. It is initialized with a \emph{root} element containing a null path and an empty Collision Record. \Weihang{All Collision Records from the scheduler are stored in the Collision History.} Each time $\Call{QueryPRMWithExperience}{\null}$ is called, the planner first selects an element from the Collision History based on a weighted sample. %
\Weihang{For an element $e$ in the Collision History, the cost of the element is assigned by a user-customized function $C(e)$. Here, we use $C(e) = l \cdot c(e) + k \cdot s(e)$, where $c(e)$ is the path cost in configuration-space distance, $s(e)$ is the number of times this node has already been selected.} Constants $l$ and $k$ adjust these contributions. \Weihang{Specifically, the sampler intends to select paths with smaller costs and fewer selections.} $C(e)$ is always greater than 0.

If the root element is selected, the robot will not identify any high-collision area and return $\Call{QueryPRM}{\null}$. If a \emph{non-root} element is selected, a random-walk algorithm is used to identify the high likelihood of inter-robot collisions region, represented as an interval along the element’s path.
\Weihang{The random-walk algorithm begins by selecting a random vertex from the element's Collision Record. From this vertex, the algorithm performs $k$ steps, visiting the left or right vertex with probabilities proportional to the neighboring edge's collision counts. For example, the probability of moving left is computed as the number of left-edge collisions divided by the sum of left-edge and right-edge collisions.
During the walk, the algorithm tracks the minimum and maximum value vertices visited. Those vertices are further mapped back to the selected element's PRM path, illustrated by the yellow region, $(c_1, c_2)$, in~\cref{fig:detour}. Because the walk is biased by the number of collisions, it preferentially selects regions with frequent inter-robot conflicts and ensures every interval retains a non-zero probability of being chosen.} 

The low-level planner reroutes around high-collision areas by randomly sampling a detour vertex $c_d$ in the PRM, which must be included in the path. As shown in~\cref{fig:detour}, it plans from $c_1$ to $c_d$ and $c_d$ to $c_2$, then merges these segments with the original path to form $(c_s, \ldots, c_1, \ldots, c_d, \ldots, c_2, \ldots, c_g)$, returned by $\Call{QueryPRMWithExperience}{\null}$.

\subsection{Re-schedule}\label{sec:retry}
As stated in~\cref{sec:feedback}, for a given set of paths $P$, we schedule the path maximum $N_{RA}$ times and record all inter-robot collisions. The parameter $N_{RA}$ allows users to trade off between scheduling and planning. When $N_{RA} = 1$, \methodname resembles CBSMP, where paths are scheduled only once. \Weihang{However, it differs from CBSMP in the following ways. (1) robot priority is determined randomly~(by adding stops) rather than pre-defined. (2) Robot paths are scheduled $N_{RA}$ times rather than only once. (3) Time at which collisions occur is decoupled from the low-level planner, making the planner purely geometric. (4) Conflicts are treated as soft constraints~(because of the random walk algorithm in~\cref{sec:agent}) instead of hard constraints as in CBS.}
Additionally, when the low-level planner calls $\Call{QueryPRMWithExperience}{\null}$, its PRM is frozen for efficiency, meaning no new vertices are added. This may hurt the probabilistic completeness of our algorithm. To address this, the low-level planner rebuilds the PRM and clears the Collision History after it has called $\Call{QueryPRMWithExperience}{\null}$ $N_B$ times.

\subsection{Probabilistic Completeness}
The proof of probabilistic completeness for the \methodname algorithm has two parts. First, we show that given a set of paths $P$, we eventually find a collision-free solution by calling $\Call{Schedule}{P}$ with infinitely many $N_{RA}$ if such a solution exists. Second, given infinitely many attempts for $\Call{QueryPRMWithExperience}{\null}$, the low-level planner of each robot will return all possible paths.

\begin{lemma}
    Let $P$ be a set of robot paths. If solution $S$ exist from scheduling robot paths $P$, then the length \Weihang{$|S| \leq \sum_{p_i \in P} (|p_i|-1)$}
\label{lem:finit_length}
\end{lemma}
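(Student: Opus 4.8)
The plan is to prove the bound by a direct counting argument on the number of forward moves, exploiting the fact that every admissible schedule is \emph{monotonic}: each robot only advances from its current representative configuration to the next one in its sequence $p_i$ (or stops), and never moves backward along $p_i$. I would first fix the canonical, non-redundant representation of $S$, namely the sequence of composite states that remains after all \emph{trivial} time steps---those in which every robot stops---have been deleted, exactly as described in \cref{sec:coord}. In this representation every transition advances at least one robot, since a transition in which no robot moved would reproduce the preceding composite state and would therefore have been removed as trivial.

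The core of the argument is then to count forward moves. Because robot $a_i$ starts at the first configuration of $p_i$ and ends at the last, and because the schedule restricts $\rho_i$ to progress monotonically along $\mathbb{P}_i$, robot $a_i$ performs exactly $|p_i|-1$ forward moves over the whole schedule---one per edge of $p_i$, each occurring at a single transition. Summing over all robots, the total number of forward moves in $S$ is $\sum_{p_i \in P}(|p_i|-1) = L - I$, where $L=\sum_{p_i\in P}|p_i|$ and $I=|P|$ is the number of robots. Each of the $|S|-1$ transitions of the canonical schedule consumes at least one of these moves, and distinct transitions consume disjoint moves, so $|S|-1 \le L - I$. Since there is at least one robot, $I \ge 1$, and hence $|S| \le L - I + 1 \le L = \sum_{p_i \in P}|p_i|$, which is the claimed bound.

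I expect the main obstacle to be the bookkeeping that connects the informal scheduling description to this clean inequality: one must argue carefully that (i) monotonicity in the path parameter forbids any robot from moving backward along its sequence $p_i$---so that, even when $p_i$ contains loops as allowed in the completeness discussion, no robot contributes more than the $|p_i|-1$ forward moves needed to reach its goal---and (ii) the removal of trivial states is precisely what guarantees the ``at least one move per transition'' property, so that the number of transitions is bounded by the \emph{number of moves} rather than by the number of time steps. A shorter sanity check is that the construction in \cref{sec:coord} instantiates only $L$ time steps to begin with, and deleting trivial states can only decrease the count; I would mention this but prefer the move-counting argument because it bounds \emph{any} admissible solution $S$, not only schedules emitted by that particular discretization. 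Note the stated bound $|S|\le L$ is loose for $I\ge 2$---the tighter $|S|\le L-I+1$ drops out of the same computation---but the weaker form suffices for its intended use, namely establishing that the coordination space of a fixed $P$ is finite so that repeated random scheduling eventually discovers a valid $S$ when one exists.
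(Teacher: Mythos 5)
Your proposal is correct and follows essentially the same argument as the paper: both bound the number of non-trivial transitions by the total number of forward moves $\sum_{p_i\in P}(|p_i|-1)$, the paper phrasing this as a contradiction against the ``one robot moves per step'' worst case and you phrasing it as a direct count. Your bookkeeping is in fact slightly more careful---the paper's concluding line asserts $|S|\le\sum_{p_i\in P}(|p_i|-1)$ where the count of \emph{states} (rather than transitions) should be $\sum_{p_i\in P}(|p_i|-1)+1$, an off-by-one your $|S|\le L-I+1$ avoids; either way the stated bound $|S|\le\sum_{p_i\in P}|p_i|$ follows.
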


\begin{proof}
If a solution $\dot{S}$ without trivial states had length $>\sum_{p_i \in P}(|p_i|-1)$, at least one robot moves per step. But the longest sequential solution has length $\sum_{p_i \in P}(|p_i|-1)$, a contradiction. Hence, $|S|\le\sum_{p_i \in P}(|p_i|-1)$.
\end{proof}

\begin{theorem}
Given an infinite number of reschedule attempts $N_{RA} \rightarrow \infty$, $\Call{Schedule}{P}$ will eventually try every possible valid scheduling of robot paths $P$. Consequently, if a solution $S$ exists in $P$, it will be found.

\label{the:cac_pc}
\end{theorem}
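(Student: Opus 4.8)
The plan is to reduce the statement to a finite-sample-space argument: I would show that the set of distinct schedules $\Call{Schedule}{P}$ can emit is \emph{finite}, that each is produced with strictly positive probability on a single call, and that infinitely many independent calls therefore exhaust the set almost surely, so that a valid $S$ (if one exists) is eventually emitted and recognized by $\Call{CollisionCheck}{S}$.

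First I would make the schedule space explicit and finite. Fix $P$ and set $L = \sum_{p_i \in P}|p_i|$, as in \cref{lem:finit_length}. A candidate schedule is fully determined by assigning to each robot $a_i$ a choice of which of the $L-1$ transitions are \emph{moves} and which are \emph{stops}, subject to the robot making exactly $|p_i|-1$ moves so that it traverses its path from $c_{s,i}$ to its goal. There are only $\binom{L-1}{|p_i|-1}$ such patterns per robot, hence at most $\prod_i \binom{L-1}{|p_i|-1}$ joint patterns. Deleting trivial (all-stop) transitions turns each joint pattern into one candidate solution $S^*$; call the resulting finite collection $\mathcal{S}$.

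Second, I would invoke \cref{lem:finit_length} to argue that $\mathcal{S}$ contains a representation of every valid schedule. Any valid solution $S$ satisfies $|S| \le \sum_i(|p_i|-1) < L$ by the lemma, so $L-1$ transitions always suffice to realize it, and padding with stops never pushes us outside the move/stop encoding above. Because the robots advance along their paths edge-by-edge and collision checking is exact on each edge/vertex pair, a collision-free coordination corresponds to one of the joint move/stop patterns; hence if any solution exists for $P$, then $S \in \mathcal{S}$. For the probabilistic part I would note that the sampler chooses the $L-|p_i|$ stop positions for each robot at random, so every pattern in $\mathcal{S}$ is produced with probability at least some $p_{\min} > 0$ on each call, and calls are independent. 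The probability that a fixed schedule is never produced in $N_{RA}$ attempts is at most $(1-p_{\min})^{N_{RA}}$, and a union bound over the finite set gives a failure probability at most $|\mathcal{S}|\,(1-p_{\min})^{N_{RA}} \to 0$ as $N_{RA}\to\infty$. Thus every schedule in $\mathcal{S}$, and in particular $S$, is emitted almost surely; once emitted, $\Call{CollisionCheck}{S}$ returns $\emptyset$ and the scheduler returns it.

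The main obstacle is the second step: arguing rigorously that the discrete move/stop encoding misses no valid coordination. This is exactly the role of \cref{lem:finit_length}, which guarantees $L$ time steps are enough, but I would still need to verify that restricting each robot to advancing at most one representative configuration per step cannot exclude an otherwise-feasible coordination, i.e., that any feasible continuous timing can be reparameterized into this lock-step form while preserving collision-freeness. Establishing this reparameterization---and handling simultaneous versus sequential moves so that the length bound of the lemma applies---is the crux; the finiteness count and the Borel--Cantelli/union-bound steps are then routine.
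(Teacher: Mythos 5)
Your proposal is correct and follows essentially the same route as the paper's proof: count the finitely many move/stop patterns per robot via binomial coefficients, observe that each joint pattern is sampled with strictly positive probability on each call, and let the per-iteration miss probability \((1-p_{\min})^{N_{RA}}\) vanish as \(N_{RA}\to\infty\). Your version is in fact slightly more careful than the paper's---the union bound over the finite schedule set (the paper only argues for a single fixed \(S'\)) and your explicit flag that the lock-step move/stop encoding must be shown not to exclude any otherwise-feasible coordination are both points the published proof passes over silently.
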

\begin{proof}
Assume, for contradiction, that a valid scheduling $S'$ of $P$ is never returned by $\Call{Schedule}{P}$, even as $N_{RA} \rightarrow \infty$. Let $l_i$ be the length of robot $i$'s path and $m = \sum_{p_i \in P} |p_i|$. There are $C(l_i, m)$ ways to schedule each path.

The probability of selecting $S'$ in one iteration is $\prod_{i \in A} \frac{1}{C(l_i, m)}$, so the probability it is not selected is $1 - \prod_{i \in A} \frac{1}{C(l_i, m)}$. Thus, as $N_{RA} \rightarrow \infty$, the probability that $S'$ is never selected approaches zero:
\begin{align*}
\lim_{N_{RA} \rightarrow \infty} P(\text{not } S') 
&= \lim_{N_{RA} \rightarrow \infty} \left(1 - \prod_{i \in A} \frac{1}{C(l_i, m)}\right)^{N_{RA}} \\
&= 0.
\end{align*}

This contradicts the assumption, so $\Call{Schedule}{P}$ eventually returns every valid scheduling of $P$.

\end{proof}

We now show that each robot has a non-zero chance of returning all different paths~(including those with loops), to compensate for the monotonicity of scheduling.

\begin{theorem}
Let robot $i$'s low-level planner be probabilistically complete. Then, as the number of iterations $n \rightarrow \infty$, the probability that the planner returns a path $\epsilon$-close to any valid path $p$ approaches 1.
\label{the:all_paths}
\end{theorem}

\begin{proof}
Fix any valid path $p = \{c_s, c_1, \ldots, c_g\}$ and any $\epsilon > 0$. By probabilistic completeness of the low-level planner, the probability of sampling a detour point within $\epsilon$ of $c_1$ is nonzero. Repeating this argument inductively for $c_2, c_3, \ldots, c_g$, each step of extending the path within $\epsilon$ has nonzero probability.

Let $q$ denote an $\epsilon$-close approximation of $p$. The probability that $q$ is not constructed in $n$ iterations is bounded by $(1-\delta)^n$, where $\delta > 0$ is the minimum probability of extending the path within $\epsilon$ at each step. As $n \to \infty$, $(1-\delta)^n \to 0$. 

Thus, for any valid path $p$, the probability that the planner produces an $\epsilon$-close path tends to 1 as $n \to \infty$. 
\end{proof}

\begin{theorem}
    Given an infinite number of iterations \( t \rightarrow \infty \) and an infinite maximum number of reschedule attempts $N_{RA}\rightarrow \infty$, \methodname will find a solution if one exists.
    \label{thm:cac_solution}
\end{theorem}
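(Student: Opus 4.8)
The plan is to prove full probabilistic completeness by composing the two ingredients already established: \cref{the:all_paths}, which guarantees that the outer loop of \cref{alg:dCAB} eventually generates every valid individual path (including loop-containing ones), and \cref{the:cac_pc}, which guarantees that the inner rescheduling loop eventually discovers every valid schedule of a fixed path set. The overall claim is that the joint event---the low-level planners produce the \emph{right} path set and the scheduler finds the \emph{right} schedule for it---still occurs with probability tending to one as both $t$ and $N_{RA}$ grow without bound.

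First I would formalize what ``a solution exists'' means in terms of the representation that \methodname actually searches over. A solution is a tuple of continuous, mutually collision-free, constraint-satisfying trajectories $\rho_1,\dots,\rho_I$, and I would assume it has positive clearance (a standard robustness hypothesis), so that a sufficiently fine discrete approximation is also collision-free. I would decompose each $\rho_i$ into its geometric trace $p_i^*$ (a path in $\mathbb{M}_{free,i}$, possibly self-intersecting or backtracking) together with a monotone, stop-augmented time parameterization. The key observation is that \methodname's schedules are \emph{monotone} along each path, so any waiting or non-monotone behavior required by $\rho_i$ must be absorbed into the path itself---exactly the role of the loop-containing paths covered by \cref{the:all_paths}. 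I would then argue that the resulting path set $P^* = \{p_1^*,\dots,p_I^*\}$ is reproducible (up to clearance) by the low-level planner, and that a monotone schedule with stops realizing the collision-free coordination exists for $P^*$. Here \cref{lem:finit_length} is invoked to guarantee that this realizing schedule has finite length, placing it inside the finite, discrete family of candidate schedules that $\Call{Schedule}{P^*}$ samples from.

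Second I would carry out the probabilistic bookkeeping. Conditioning on a single outer iteration, \cref{the:all_paths} gives that the event ``the low-level planners return $P^*$ (or a clearance-respecting equivalent)'' has probability bounded below by some constant $q > 0$; conditioned on $P^*$ being produced, \cref{the:cac_pc} gives that $\Call{Schedule}{P^*}$ finds the realizing schedule within $N_{RA}$ attempts with probability tending to one as $N_{RA}\to\infty$, hence at least some $r > 0$. Because successive outer iterations draw paths via $\Call{MotionPlanWithExperience}{\null}$ and schedules independently, each outer iteration succeeds with probability at least $qr > 0$, so the probability that \methodname has not succeeded after $t$ outer iterations is at most $(1-qr)^t \to 0$. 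This yields the theorem.

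The hard part is the decomposition step, not the probabilistic bookkeeping. I must argue rigorously that restricting to monotone, stop-augmented schedules loses no solutions once the path family is permitted to contain loops, and that a positive-clearance continuous solution is representable---up to clearance---by a finite path set drawn from the low-level planner together with a finite schedule from the scheduler. This requires pinning down the clearance assumption, relating the PRM's resolution to that clearance through the assumed probabilistic completeness of the low-level planner, and verifying that collision checking on the discretized candidate solution $S^*$ genuinely certifies the underlying continuous trajectory as collision-free. The constants $q$ and $r$ will depend on this clearance, but only their strict positivity matters for the limit.
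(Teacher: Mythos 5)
Your proposal takes essentially the same route as the paper: the published proof is a two-sentence composition of \cref{the:all_paths} (the low-level planners eventually generate every valid path, including loop-containing ones) with \cref{the:cac_pc} (the scheduler eventually tries every schedule of a fixed path set), concluding that the correct path set and the correct schedule are eventually found together. The additional machinery you supply---the decomposition of a continuous solution into geometric traces plus a monotone stop-augmented schedule, the positive-clearance hypothesis, the appeal to \cref{lem:finit_length} for finiteness of the realizing schedule, and the explicit per-iteration bound \( (1-qr)^t \rightarrow 0 \)---is all elaboration that the paper omits entirely, and if anything it makes the argument more rigorous than the printed version.
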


\begin{proof}
According to \cref{the:all_paths}, the probability that a robot returns a path $\epsilon$-close to any valid path, including those with loops, approaches 1 as $t \rightarrow \infty$. Consequently, \methodname will, with probability 1, eventually obtain $\epsilon$-close approximations of every possible combination of robot paths as $t \rightarrow \infty$.

    As stated in \cref{the:cac_pc}, for each possible scheduling of paths, \methodname will find a solution if one exists when \( t \rightarrow \infty \). Therefore, given infinite iterations, the \methodname is guaranteed to identify a solution if one exists.
\end{proof}

\section{Experimental Results}\label{sec:experiment}

We test \methodname on three setups: (1) the extended 2D doorway problem with constrained Panda arms (\cref{sce:doorway}), (2) three-manipulator coordination tasks (\cref{sec:plus}), and (3) a cross maze with 2–5 arms to evaluate scalability (\cref{sce:scaleup}). All algorithms are implemented in C++ in OMPL~\cite{sucan2012the-open-motion-planning-library} with manifold constrained planning~\cite{kingston2019exploring-implicit-spaces-for-constrained} and tested on a PC with an Intel i5-8365 1.6GHz CPU and 8GB of RAM. We utilized the MuJoCo physics engine~\cite{mujoco} for collision detection \Weihang{in all experiments}. \Weihang{We measure the end-to-end solving time after MuJoCo and the OMPL problem definition have been initialized}. We used CBSMP~\cite{solis_vidana_representation-optimal_2021}, dRRT~\cite{Solovey2013FindingAN}, \Weihang{and RRT-Connect~\cite{kuffner2000rrtconnect}} as our comparison baseline. Since the source code for CBSMP and dRRT is not available, we implemented them in OMPL and added support for planning under manifold constraints. \Weihang{All robots are subject to end-effector constraints. To efficiently generate samples on the manifold, we first sample the end-effector pose and compute the corresponding configurations using inverse kinematics. During interpolation, the algorithm projects intermediate configurations onto the constraint manifold based on~\citet{kingston2019exploring-implicit-spaces-for-constrained}.} We benchmarked different map sizes and the number of expansions before connecting to the goal for dRRT, selecting the parameters with the best performance. Additionally, we used the same set of PRM parameters for both CBSMP and \methodname.
\Weihang{Unless otherwise stated, we set $N_B=10$ for \methodname in all experiments.}

\subsection{3D Doorway Setup}\label{sce:doorway}

This setup involves a 3D doorway setup where two 7-DoF Franka Panda arms, each with an end stick, are tasked with swapping their end effector positions as shown in \cref{fig:arm_doorway}. The arms' end effectors are constrained to move only on the xy-plane within the maze. Each algorithm was run 60 times with a 60-second timeout.  We evaluated the performance of dRRT, RRT-Connect, CBSMP, and \methodname, and the cumulative distribution function (CDF) of the solve times is shown in \cref{fig:arms_doorway_result}. \methodname demonstrates superior performance compared to other state-of-the-art algorithms, while CBSMP fails to solve the problem even once. This result further highlights the power of scheduling over path enumeration, as the robot has more possible configurations to avoid collisions. 
\Weihang{dRRT has the shorter average path length of $5.56$, whereas \methodname has a length of $7.03$.} 
\begin{figure}[!ht]
    \centering
    \resizebox{0.7\linewidth}{!}{\begin{tikzpicture}

\definecolor{color0}{rgb}{0.75,0,0.75}
\definecolor{color1}{rgb}{0.75,0.75,0}

\begin{axis}[
legend cell align={left},
legend style={
  fill opacity=0.8,
  draw opacity=1,
  text opacity=1,
  at={(0.97,0.03)},
  anchor=south east,
  draw=white!80!black
},
tick align=outside,
tick pos=left,
title={CDF of Solve Times for Different Planners},
x grid style={white!69.0196078431373!black},
xlabel={Time (s)},
xmajorgrids,
xmin=0, xmax=60,
xtick style={color=black},
y grid style={white!69.0196078431373!black},
ylabel={Cumulative Probability},
ymajorgrids,
ymin=0, ymax=1,
ytick style={color=black},
ytick={0,0.2,0.4,0.6,0.8,1},
yticklabels={0.0,0.2,0.4,0.6,0.8,1.0}
]
\addplot [line width=2.5pt, color0, dash pattern=on 1pt off 3pt on 3pt off 3pt]
table {%
0.309 0
1.183 0.0204081632653061
2.61 0.0408163265306122
2.943 0.0612244897959184
3.162 0.0816326530612245
3.839 0.102040816326531
4.951 0.122448979591837
5.093 0.142857142857143
6.225 0.163265306122449
6.812 0.183673469387755
9.551 0.204081632653061
10.617 0.224489795918367
14.606 0.244897959183673
17.158 0.26530612244898
18.764 0.285714285714286
20.481 0.306122448979592
24.159 0.326530612244898
24.977 0.346938775510204
27.389 0.36734693877551
27.532 0.387755102040816
27.755 0.408163265306122
28.526 0.428571428571429
28.558 0.448979591836735
29.694 0.469387755102041
32.414 0.489795918367347
33.803 0.510204081632653
36.09 0.530612244897959
37.866 0.551020408163265
42.568 0.571428571428571
43.646 0.591836734693878
44.465 0.612244897959184
47.819 0.63265306122449
51.728 0.653061224489796
55.374 0.673469387755102
56.753 0.693877551020408
60.001 0.714285714285714
60.003 0.73469387755102
60.012 0.755102040816326
60.033 0.775510204081633
60.038 0.795918367346939
60.074 0.816326530612245
60.082 0.836734693877551
60.122 0.857142857142857
60.15 0.877551020408163
60.151 0.897959183673469
60.158 0.918367346938776
60.19 0.938775510204082
60.193 0.959183673469388
60.24 0.979591836734694
60.371 1
};
\addlegendentry{\methodname, $N_{RA}$=1}
\addplot [line width=2.5pt, color0, dashed]
table {%
0.179 0
0.227 0.0169491525423729
0.238 0.0338983050847458
0.271 0.0508474576271186
0.369 0.0677966101694915
0.482 0.0847457627118644
0.526 0.101694915254237
0.561 0.11864406779661
0.59 0.135593220338983
0.649 0.152542372881356
0.668 0.169491525423729
0.687 0.186440677966102
0.701 0.203389830508475
0.715 0.220338983050847
0.72 0.23728813559322
0.76 0.254237288135593
0.846 0.271186440677966
0.852 0.288135593220339
1.038 0.305084745762712
1.377 0.322033898305085
1.447 0.338983050847458
1.5 0.355932203389831
1.701 0.372881355932203
1.825 0.389830508474576
1.867 0.406779661016949
1.971 0.423728813559322
2.461 0.440677966101695
2.536 0.457627118644068
2.561 0.474576271186441
2.751 0.491525423728814
2.888 0.508474576271186
3.019 0.525423728813559
3.049 0.542372881355932
3.159 0.559322033898305
3.281 0.576271186440678
3.499 0.593220338983051
3.598 0.610169491525424
3.602 0.627118644067797
3.661 0.644067796610169
3.885 0.661016949152542
3.904 0.677966101694915
4.443 0.694915254237288
4.561 0.711864406779661
5.513 0.728813559322034
6.748 0.745762711864407
6.775 0.76271186440678
6.788 0.779661016949153
7.859 0.796610169491525
8.219 0.813559322033898
8.506 0.830508474576271
8.681 0.847457627118644
8.687 0.864406779661017
12.265 0.88135593220339
15.893 0.898305084745763
16.97 0.915254237288136
18.487 0.932203389830508
19.392 0.949152542372881
22.056 0.966101694915254
22.81 0.983050847457627
28.832 1
};
\addlegendentry{\methodname, $N_{RA}$=200}
\addplot [line width=2.5pt, color0]
table {%
0.292 0
0.332 0.0169491525423729
0.353 0.0338983050847458
0.367 0.0508474576271186
0.393 0.0677966101694915
0.543 0.0847457627118644
0.604 0.101694915254237
0.709 0.11864406779661
0.711 0.135593220338983
0.746 0.152542372881356
0.837 0.169491525423729
1.093 0.186440677966102
1.101 0.203389830508475
1.331 0.220338983050847
1.458 0.23728813559322
1.865 0.254237288135593
2.012 0.271186440677966
2.061 0.288135593220339
2.064 0.305084745762712
2.109 0.322033898305085
2.109 0.338983050847458
2.157 0.355932203389831
2.347 0.372881355932203
2.447 0.389830508474576
2.448 0.406779661016949
2.497 0.423728813559322
2.661 0.440677966101695
2.788 0.457627118644068
2.843 0.474576271186441
3.131 0.491525423728814
3.26 0.508474576271186
3.411 0.525423728813559
3.433 0.542372881355932
3.582 0.559322033898305
3.6 0.576271186440678
3.638 0.593220338983051
3.728 0.610169491525424
3.864 0.627118644067797
4.094 0.644067796610169
4.513 0.661016949152542
5.019 0.677966101694915
5.1 0.694915254237288
5.352 0.711864406779661
5.865 0.728813559322034
6.082 0.745762711864407
6.152 0.76271186440678
6.529 0.779661016949153
6.967 0.796610169491525
7.342 0.813559322033898
7.597 0.830508474576271
8.626 0.847457627118644
8.762 0.864406779661017
8.91 0.88135593220339
9.909 0.898305084745763
10.157 0.915254237288136
10.809 0.932203389830508
12.057 0.949152542372881
16.947 0.966101694915254
21.324 0.983050847457627
26.678 1
};
\addlegendentry{\methodname, $N_{RA}$=400}
\addplot [line width=2.5pt, color1]
table {%
60 1
};
\addlegendentry{CBSMP}
\addplot [line width=2.5pt, blue]
table {%
3.575 0
4.205 0.0169491525423729
4.267 0.0338983050847458
4.488 0.0508474576271186
5.842 0.0677966101694915
6.647 0.0847457627118644
7.518 0.101694915254237
8.759 0.11864406779661
9.108 0.135593220338983
9.389 0.152542372881356
9.437 0.169491525423729
13.147 0.186440677966102
13.529 0.203389830508475
14.135 0.220338983050847
14.138 0.23728813559322
15.578 0.254237288135593
16.206 0.271186440677966
16.882 0.288135593220339
17.62 0.305084745762712
17.645 0.322033898305085
18.282 0.338983050847458
18.77 0.355932203389831
18.86 0.372881355932203
18.88 0.389830508474576
20.056 0.406779661016949
20.115 0.423728813559322
20.256 0.440677966101695
20.369 0.457627118644068
20.826 0.474576271186441
21.616 0.491525423728814
21.773 0.508474576271186
21.831 0.525423728813559
22.285 0.542372881355932
24.467 0.559322033898305
24.693 0.576271186440678
24.993 0.593220338983051
26.03 0.610169491525424
27.52 0.627118644067797
28.799 0.644067796610169
29.113 0.661016949152542
29.476 0.677966101694915
29.895 0.694915254237288
31.263 0.711864406779661
31.687 0.728813559322034
33.636 0.745762711864407
34.58 0.76271186440678
35.637 0.779661016949153
36.355 0.796610169491525
37.196 0.813559322033898
37.894 0.830508474576271
38.612 0.847457627118644
38.925 0.864406779661017
39.355 0.88135593220339
39.549 0.898305084745763
40.112 0.915254237288136
41.244 0.932203389830508
46.807 0.949152542372881
56.318 0.966101694915254
59.485 0.983050847457627
60.032 1
};
\addlegendentry{dRRT}
\addplot [line width=2.5pt, red]
table {%
9.58 0.0
9.986 0.01
13.029 0.02
14.885 0.03
15.939 0.04
16.345 0.05
18.858 0.06
19.422 0.07
19.984 0.08
20.14 0.09
20.996 0.1
21.603 0.11
21.621 0.12
22.445 0.13
24.123 0.14
24.188 0.15
24.337 0.16
25.638 0.17
25.696 0.18
25.959 0.19
26.099 0.2
26.463 0.21
26.806 0.22
26.833 0.23
28.65 0.24
28.991 0.25
29.117 0.26
29.375 0.27
30.105 0.28
30.143 0.29
30.331 0.3
30.402 0.31
30.574 0.32
30.658 0.33
30.863 0.34
31.524 0.35
31.755 0.36
32.94 0.37
33.33 0.38
33.612 0.39
35.416 0.4
35.643 0.41
35.899 0.42
36.058 0.43
36.996 0.44
37.427 0.45
37.527 0.46
37.773 0.47
38.007 0.48
39.107 0.49
40.072 0.5
40.341 0.51
41.238 0.52
41.567 0.53
42.067 0.54
42.095 0.55
42.314 0.56
44.878 0.57
44.953 0.58
45.897 0.59
46.347 0.6
46.449 0.61
46.763 0.62
47.83 0.63
48.829 0.64
49.585 0.65
49.65 0.66
49.709 0.67
50.617 0.68
51.396 0.69
51.961 0.7
51.971 0.71
52.552 0.72
53.496 0.73
53.839 0.74
54.138 0.75
54.774 0.76
56.588 0.77
56.619 0.78
57.456 0.79
58.256 0.8
58.626 0.81
59.204 0.82
59.214 0.83
60.213 0.84
60.324 0.85
60.356 0.86
60.358 0.87
60.396 0.88
60.397 0.89
60.406 0.9
60.415 0.91
60.427 0.92
60.467 0.93
60.47 0.94
60.475 0.95
60.489 0.96
60.579 0.97
60.612 0.98
60.725 0.99
60.861 1.0
};
\addlegendentry{RRTC}
\addplot [line width=2pt, color1, forget plot]
table {%
0 0
60 0
};
\addplot [line width=2.5pt, white!50.1960784313725!black, opacity=0.5, dashed, forget plot]
table {%
0 0
60 0
};
\addplot [line width=2.5pt, white!50.1960784313725!black, opacity=0.5, dashed, forget plot]
table {%
0 0.1
60 0.1
};
\addplot [line width=2.5pt, white!50.1960784313725!black, opacity=0.5, dashed, forget plot]
table {%
0 0.2
60 0.2
};
\addplot [line width=2.5pt, white!50.1960784313725!black, opacity=0.5, dashed, forget plot]
table {%
0 0.3
60 0.3
};
\addplot [line width=2.5pt, white!50.1960784313725!black, opacity=0.5, dashed, forget plot]
table {%
0 0.4
60 0.4
};
\addplot [line width=2.5pt, white!50.1960784313725!black, opacity=0.5, dashed, forget plot]
table {%
0 0.5
60 0.5
};
\addplot [line width=2.5pt, white!50.1960784313725!black, opacity=0.5, dashed, forget plot]
table {%
0 0.6
60 0.6
};
\addplot [line width=2.5pt, white!50.1960784313725!black, opacity=0.5, dashed, forget plot]
table {%
0 0.7
60 0.7
};
\addplot [line width=2.5pt, white!50.1960784313725!black, opacity=0.5, dashed, forget plot]
table {%
0 0.8
60 0.8
};
\addplot [line width=2.5pt, white!50.1960784313725!black, opacity=0.5, dashed, forget plot]
table {%
0 0.9
60 0.9
};
\addplot [line width=2.5pt, white!50.1960784313725!black, opacity=0.5, dashed, forget plot]
table {%
0 1
60 1
};
\end{axis}

\end{tikzpicture}}
    \caption{CDF of solve time for the manipulator doorway setup with two 7-DOF Franka Panda arms shown in~\cref{fig:arm_doorway}. Note that CBSMP cannot solve this problem even once, as indicated by the horizontal line at 0. }
    \label{fig:arms_doorway_result}
\end{figure}

\subsection{Three-manipulator Setup}\label{sec:plus}

\begin{figure}[!ht]
\vspace{0.1in}
    \centering
    \begin{subfigure}[t]{0.17\textwidth}
        \centering
        \includegraphics[width=\textwidth]{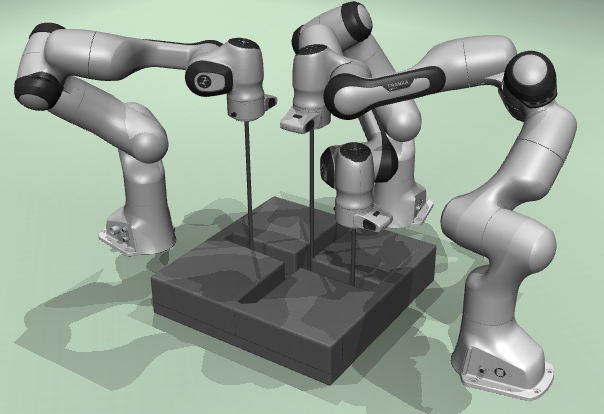}
        \caption{Case 1: 3-manipulator setup with a cross maze}
        \label{fig:3arms_case1}
    \end{subfigure}
    \begin{subfigure}[t]{0.3\textwidth}
        \centering
        \includegraphics[width=0.5\textwidth]{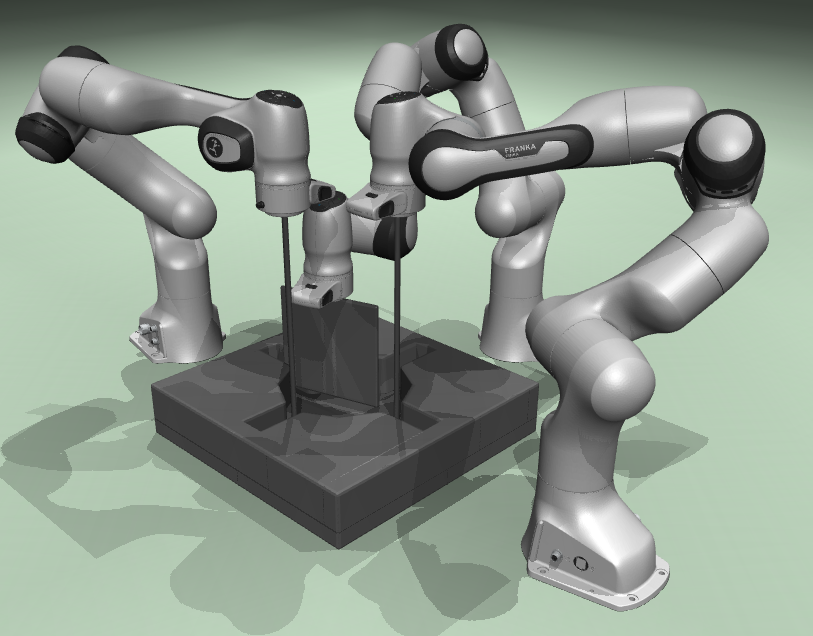}\includegraphics[width=0.4\textwidth]{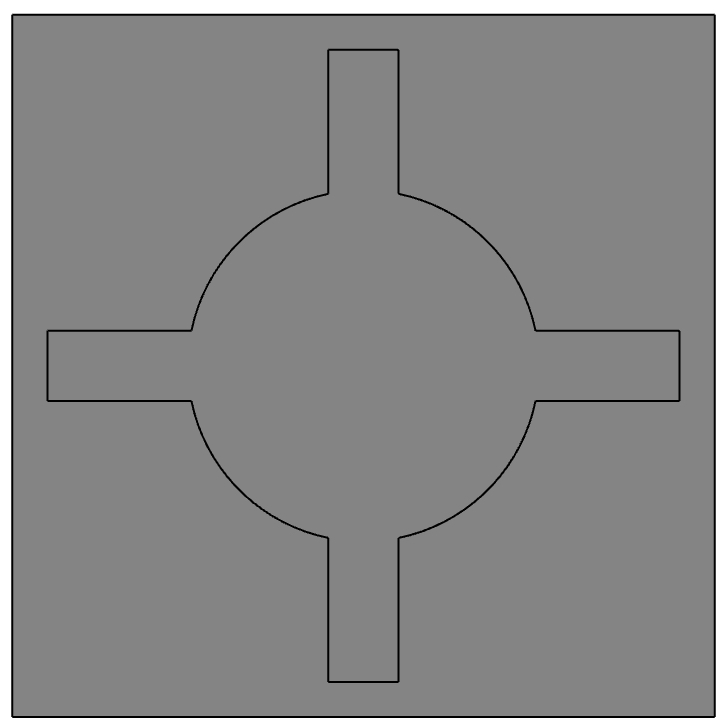} 
        \caption{Case 2: 3-manipulator setup with circular arrangement. Maze top view is shown on the right. }
        \label{fig:3arms_case2}
    \end{subfigure}
    \caption{Two different three-manipulator setups}
    \label{fig:3arms_setup}
\end{figure}

Three Franka Panda manipulators plan motions in clustered environments with two setups (\cref{fig:3arms_setup}): (1) all manipulators have stick-shaped end effectors constrained to planar translation, with one shorter than the others, and (2) two manipulators have stick-shaped end effectors constrained to the plane, and the third manipulator has a paddle-shaped end effector that can also rotate and translate in a central circular region. We test six start-goal pairs with a 60s timeout for Case 1 and nine pairs with a 120s timeout for Case 2, each repeated 50 times. CBSMP fails on one pair in Case 1 and two in Case 2, while \methodname with $N_{RA}=200$ achieves the highest success rate (\cref{tab:plus_result}). \Weihang{For case 1, dRRT has the shortest average path length of $0.57$, and CBSMP and StAC are comparable with $0.73$ and $0.83$. For case 2, dRRT has the shortest average path length of 0.76, and CBSMP and StAC have lengths of $1.10$ and $1.53$  respectively.}
\definecolor{mycolor}{RGB}{0,0,0}%
\definecolor{lightgray}{gray}{0.9}%
\definecolor{darkgrey}{gray}{0.7}%
\definecolor{black}{rgb}{0, 0, 0}%
\normalsize%
\begin{table}[!ht]%
\vspace{10px}
\setlength{\tabcolsep}{5pt} %
\begin{tabularx}{\textwidth}{p{1cm}|c|r|c|c|c|c|c}%
\multicolumn{1}{c|}{Problem} & \multicolumn{2}{c|}{Method} & \multicolumn{3}{c|}{\# of Queries} & Coord. & Succ. \\%
& Planner & $N_{RA}$ & Q1 & Median & Q3 & ratio & \\%
\cline{1-8}%
\multirow{6}{=}{Case 1 (\cref{fig:3arms_case1})} & \cellcolor{lightgray}CBSMP & \cellcolor{lightgray}- & \cellcolor{lightgray}1 & \cellcolor{lightgray}1 & \cellcolor{lightgray}26 & \cellcolor{lightgray}11.5\% & \cellcolor{lightgray}79.3\% \\%
\arrayrulecolor{darkgrey}\cline{2-8}\arrayrulecolor{black}%
& dRRT & - & - & - & - & - & 75.7\% \\%
\arrayrulecolor{darkgrey}\cline{2-8}\arrayrulecolor{black}%
& \cellcolor{lightgray}RRTC & \cellcolor{lightgray}- & \cellcolor{lightgray}- & \cellcolor{lightgray}- & \cellcolor{lightgray}- & \cellcolor{lightgray}- & \cellcolor{lightgray}28.9\% \\%
\cline{2-8}%
& \multirow{3}{*}{\textcolor{mycolor}{\methodname}} & \textcolor{mycolor}{1} & 1 & 1 & 29 & 20.3\% & 86.0\% \\%
&& \cellcolor{lightgray}\textcolor{mycolor}{200} & \cellcolor{lightgray}1 & \cellcolor{lightgray}1 & \cellcolor{lightgray}10 & \cellcolor{lightgray}39.1\% & \cellcolor{lightgray}\textbf{89.7\%} \\%
&& \textcolor{mycolor}{400} & 1 & 1 & 10 & 37.5\% & 86.3\% \\%
\cline{1-8}%
\multirow{6}{=}{Case 2 (\cref{fig:3arms_case2})} & CBSMP & - & 29 & 73 & 282 & 1.4\% & 42.0\% \\%
\arrayrulecolor{darkgrey}\cline{2-8}\arrayrulecolor{black}%
& \cellcolor{lightgray}dRRT & \cellcolor{lightgray}- & \cellcolor{lightgray}- & \cellcolor{lightgray}- & \cellcolor{lightgray}- & \cellcolor{lightgray}- & \cellcolor{lightgray}19.0\% \\%
\arrayrulecolor{darkgrey}\cline{2-8}\arrayrulecolor{black}%
& RRTC & - & - & - & - & - & 22.2\% \\%
\cline{2-8}%
& \multirow{3}{*}{\textcolor{mycolor}{\methodname}} & \cellcolor{lightgray}\textcolor{mycolor}{1} & \cellcolor{lightgray}32 & \cellcolor{lightgray}174 & \cellcolor{lightgray}270 & \cellcolor{lightgray}15.4\% & \cellcolor{lightgray}47.3\% \\%
&& \textcolor{mycolor}{200} & 3 & 32 & 99 & 54.2\% & \textbf{64.0\%} \\%
&& \cellcolor{lightgray}\textcolor{mycolor}{400} & \cellcolor{lightgray}2 & \cellcolor{lightgray}32 & \cellcolor{lightgray}78 & \cellcolor{lightgray}65.6\% & \cellcolor{lightgray}61.7\% \\%
\cline{1-8}%
\end{tabularx}%
\caption{We randomly generated 9 start-goal pairs for each problem and tested them on the three algorithms, with each start-goal pair run 50 times and a timeout~(\textit{T/O}) of 60 seconds. For both problems, there are three subproblems where none of the planners could find a solution even once, so we excluded those subproblems for clearer visualization. We show the \# of path sets the high-level planner queried from the low-level planner before finding the solution for the first quartile (Q1), median, and third quartile. \Weihang{\textit{Coord. ratio} is the ratio between the scheduling time and the total time.}
}%
\label{tab:plus_result}%
\end{table}

\subsection{Multi-manipulator Setups}\label{sce:scaleup}

To test scalability, we evaluate \methodname on a cross maze with end effectors constrained to the yz-plane and staggered along the x-axis so only the s-shaped tips collide (\cref{fig:scale_up_detail_maze}, \cref{tab:all_arm}). We generated eight start-goal pairs, each run ten times, yielding 80 trials per algorithm (dRRT, CBSMP, and \methodname). \Weihang{Among the 8 sampled queries, some instances are easier than others. For these easier instances, our coordinator queries each individual motion planner only once to obtain a valid path and finds a feasible schedule. This is why the solve time for the first quartile of the four-arm problem is 0.075s. In contrast, CBSMP must perform extensive re-planning before resolving conflicts; for the same case, it issues 31 motion-planning queries before finding a solution.} With 2-4 arms, \methodname finds schedules within about ten queries, but beyond 30 DoF, scheduling becomes the bottleneck.

\begin{figure}[!ht]
    \centering
    \includegraphics[width=0.4\linewidth]{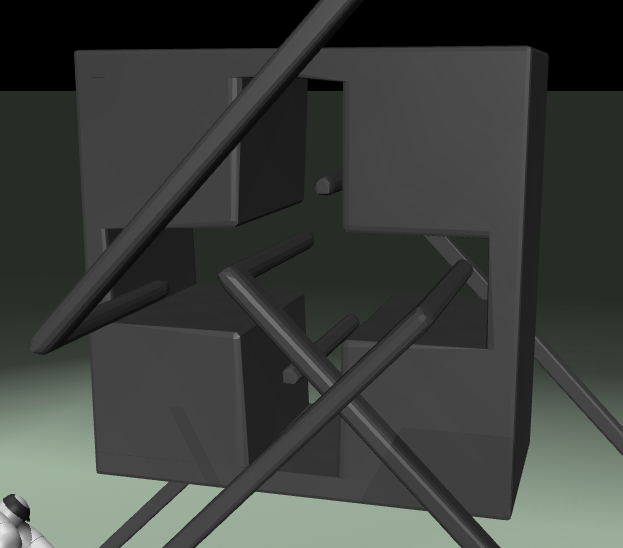}
    \caption{The multi-manipulator setups constrain the end effector to the yz-plane within the cross area. Each arm is staggered along the x-axis, so the ends of the S-shaped end effector will collide.}
    \label{fig:scale_up_detail_maze}
\end{figure}

\definecolor{mycolor}{RGB}{0,0,0}%
\definecolor{lightgray}{gray}{0.9}%
\definecolor{darkgrey}{gray}{0.7}%
\definecolor{black}{rgb}{0, 0, 0}%
\begin{table*}[!ht]%
\vspace{10px}
\centering
\begin{tabularx}{\textwidth}{c|X|r|cc|cc|cc|c|c}%
\multicolumn{1}{c|}{Problem}&\multicolumn{2}{c}{Method}&\multicolumn{6}{c|}{Planning Statistics}&\multirow{1}{*}{Coord.}&\multirow{1}{*}{Succ.}\\%
&&&\multicolumn{2}{c}{Q1}&\multicolumn{2}{c}{Median}&\multicolumn{2}{c|}{Q3}&ratio&\\%
&Planner&$N_{RA}$&T&Q&T&Q&T&Q&&\\%
\cline{1-11}%
\multirow{5}{*}{\includegraphics[width=0.13\linewidth]{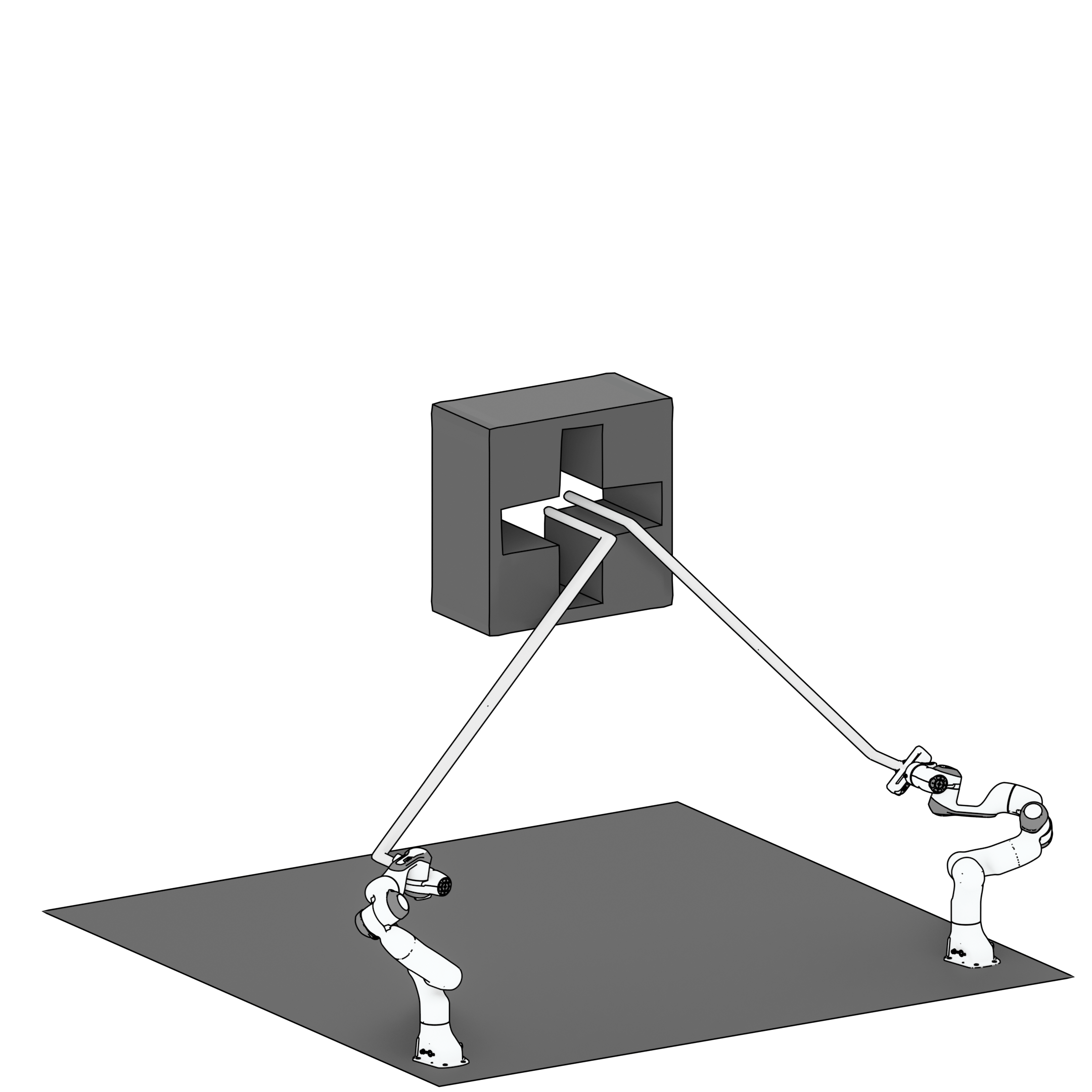}}&dRRT&-&0.016&-&0.042&-&0.068&-&-&\textbf{100.0\%}\\%
&\cellcolor{lightgray}CBSMP&\cellcolor{lightgray}-&\cellcolor{lightgray}0.015&\cellcolor{lightgray}1&\cellcolor{lightgray}0.099&\cellcolor{lightgray}14&\cellcolor{lightgray}0.160&\cellcolor{lightgray}11&\cellcolor{lightgray}10.3\%&\textbf{\cellcolor{lightgray}100.0\%}\\%
\arrayrulecolor{darkgrey}\cline{2-11}\arrayrulecolor{black}%
&\multirow{4}{*}{\textcolor{mycolor}{\methodname}}&\textcolor{mycolor}{1}&\textcolor{mycolor}{0.015}&\textcolor{mycolor}{1}&\textcolor{mycolor}{0.021}&\textcolor{mycolor}{1}&\textcolor{mycolor}{0.029}&\textcolor{mycolor}{1}&\textcolor{mycolor}{38.6\%}&\textbf{\textcolor{mycolor}{100.0\%}}\\%
&&\textcolor{mycolor}{\cellcolor{lightgray}200}&\textbf{\textcolor{mycolor}{\cellcolor{lightgray}0.013}}&\textcolor{mycolor}{\cellcolor{lightgray}1}&\textbf{\textcolor{mycolor}{\cellcolor{lightgray}0.017}}&\textcolor{mycolor}{\cellcolor{lightgray}1}&\textcolor{mycolor}{\cellcolor{lightgray}0.023}&\textcolor{mycolor}{\cellcolor{lightgray}1}&\textcolor{mycolor}{\cellcolor{lightgray}36.5\%}&\textbf{\textcolor{mycolor}{\cellcolor{lightgray}100.0\%}}\\%
&&\textcolor{mycolor}{400}&\textcolor{mycolor}{0.014}&\textcolor{mycolor}{1}&\textbf{\textcolor{mycolor}{0.017}}&\textcolor{mycolor}{1}&\textbf{\textcolor{mycolor}{0.022}}&\textcolor{mycolor}{1}&\textcolor{mycolor}{34.0\%}&\textbf{\textcolor{mycolor}{100.0\%}}\\%
&&\textcolor{mycolor}{\cellcolor{lightgray}400*}&\textcolor{mycolor}{\cellcolor{lightgray}0.015}&\textcolor{mycolor}{\cellcolor{lightgray}1}&\textcolor{mycolor}{\cellcolor{lightgray}0.018}&\textcolor{mycolor}{\cellcolor{lightgray}1}&\textcolor{mycolor}{\cellcolor{lightgray}0.028}&\textcolor{mycolor}{\cellcolor{lightgray}1}&\textcolor{mycolor}{\cellcolor{lightgray}4.3\%}&\textbf{\textcolor{mycolor}{\cellcolor{lightgray}100.0\%}}\\%
\cline{1-11}%
\multirow{5}{*}{\includegraphics[width=0.13\linewidth]{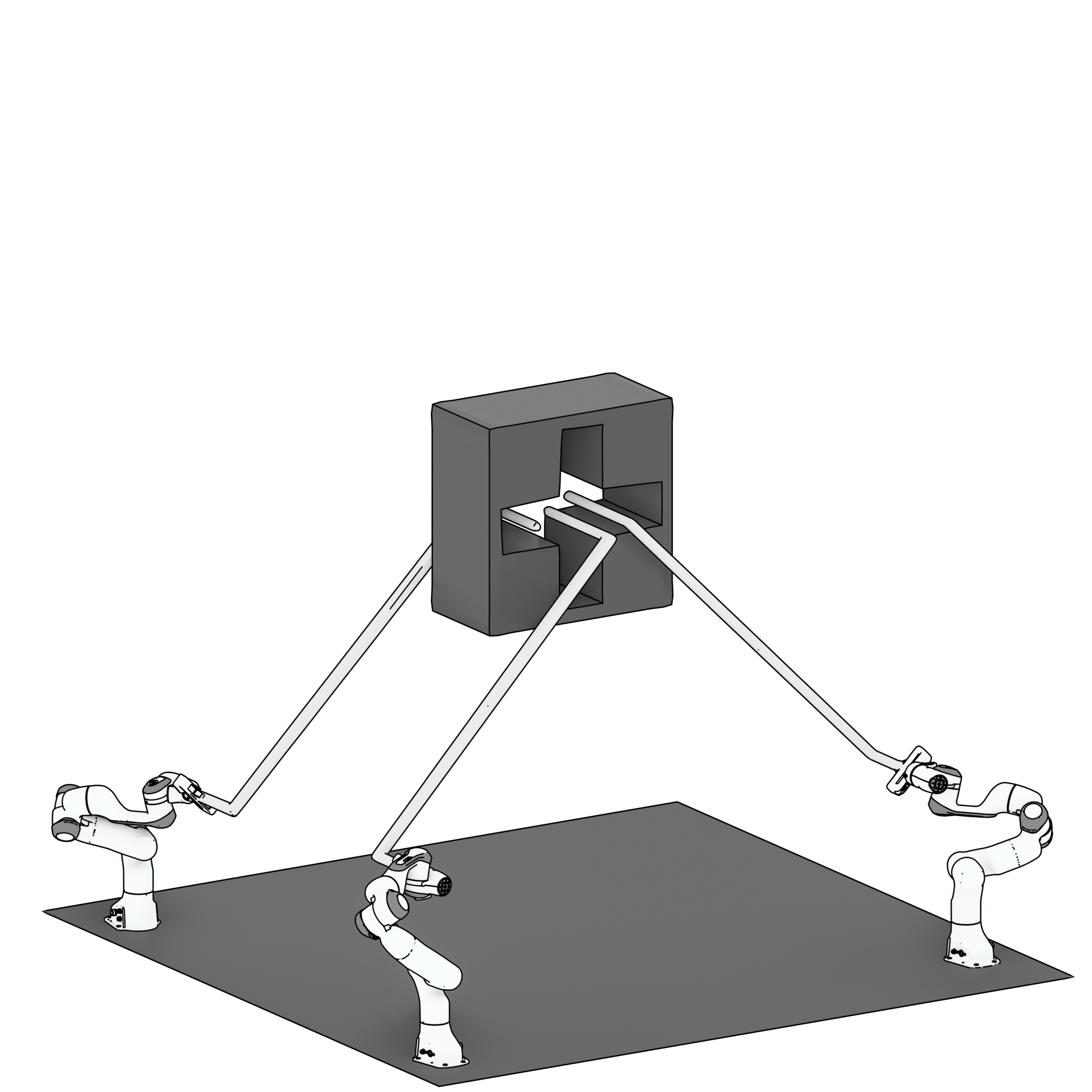}}&dRRT&-&1.398&-&2.794&-&5.242&-&-&\textbf{100.0\%}\\%
&\cellcolor{lightgray}CBSMP&\cellcolor{lightgray}-&\cellcolor{lightgray}0.996&\cellcolor{lightgray}31&\cellcolor{lightgray}2.906&\cellcolor{lightgray}145&\cellcolor{lightgray}8.162&\cellcolor{lightgray}315&\cellcolor{lightgray}3.1\%&\cellcolor{lightgray}94.4\%\\%
\arrayrulecolor{darkgrey}\cline{2-11}\arrayrulecolor{black}%
&\multirow{4}{*}{\textcolor{mycolor}{\methodname}}&\textcolor{mycolor}{1}&\textcolor{mycolor}{0.086}&\textcolor{mycolor}{1}&\textcolor{mycolor}{0.737}&\textcolor{mycolor}{8}&\textcolor{mycolor}{2.459}&\textcolor{mycolor}{22}&\textcolor{mycolor}{35.1\%}&\textbf{\textcolor{mycolor}{100.0\%}}\\%
&&\textcolor{mycolor}{\cellcolor{lightgray}200}&\textbf{\textcolor{mycolor}{\cellcolor{lightgray}0.057}}&\textcolor{mycolor}{\cellcolor{lightgray}1}&\textbf{\textcolor{mycolor}{\cellcolor{lightgray}0.085}}&\textcolor{mycolor}{\cellcolor{lightgray}1}&\textbf{\textcolor{mycolor}{\cellcolor{lightgray}0.177}}&\textcolor{mycolor}{\cellcolor{lightgray}1}&\textbf{\textcolor{mycolor}{\cellcolor{lightgray}84.7\%}}&\textbf{\textcolor{mycolor}{\cellcolor{lightgray}100.0\%}}\\%
&&\textcolor{mycolor}{400}&\textcolor{mycolor}{0.061}&\textcolor{mycolor}{1}&\textcolor{mycolor}{0.092}&\textcolor{mycolor}{1}&\textcolor{mycolor}{0.192}&\textcolor{mycolor}{1}&\textcolor{mycolor}{87.2\%}&\textbf{\textcolor{mycolor}{100.0\%}}\\%
&&\textcolor{mycolor}{\cellcolor{lightgray}400*}&\textcolor{mycolor}{\cellcolor{lightgray}0.058}&\textcolor{mycolor}{\cellcolor{lightgray}1}&\textcolor{mycolor}{\cellcolor{lightgray}0.098}&\textcolor{mycolor}{\cellcolor{lightgray}1}&\textcolor{mycolor}{\cellcolor{lightgray}0.249}&\textcolor{mycolor}{\cellcolor{lightgray}1}&\textcolor{mycolor}{\cellcolor{lightgray}91.3\%}&\textbf{\textcolor{mycolor}{\cellcolor{lightgray}100.0\%}}\\%
\cline{1-11}%
\multirow{5}{*}{\includegraphics[width=0.13\linewidth]{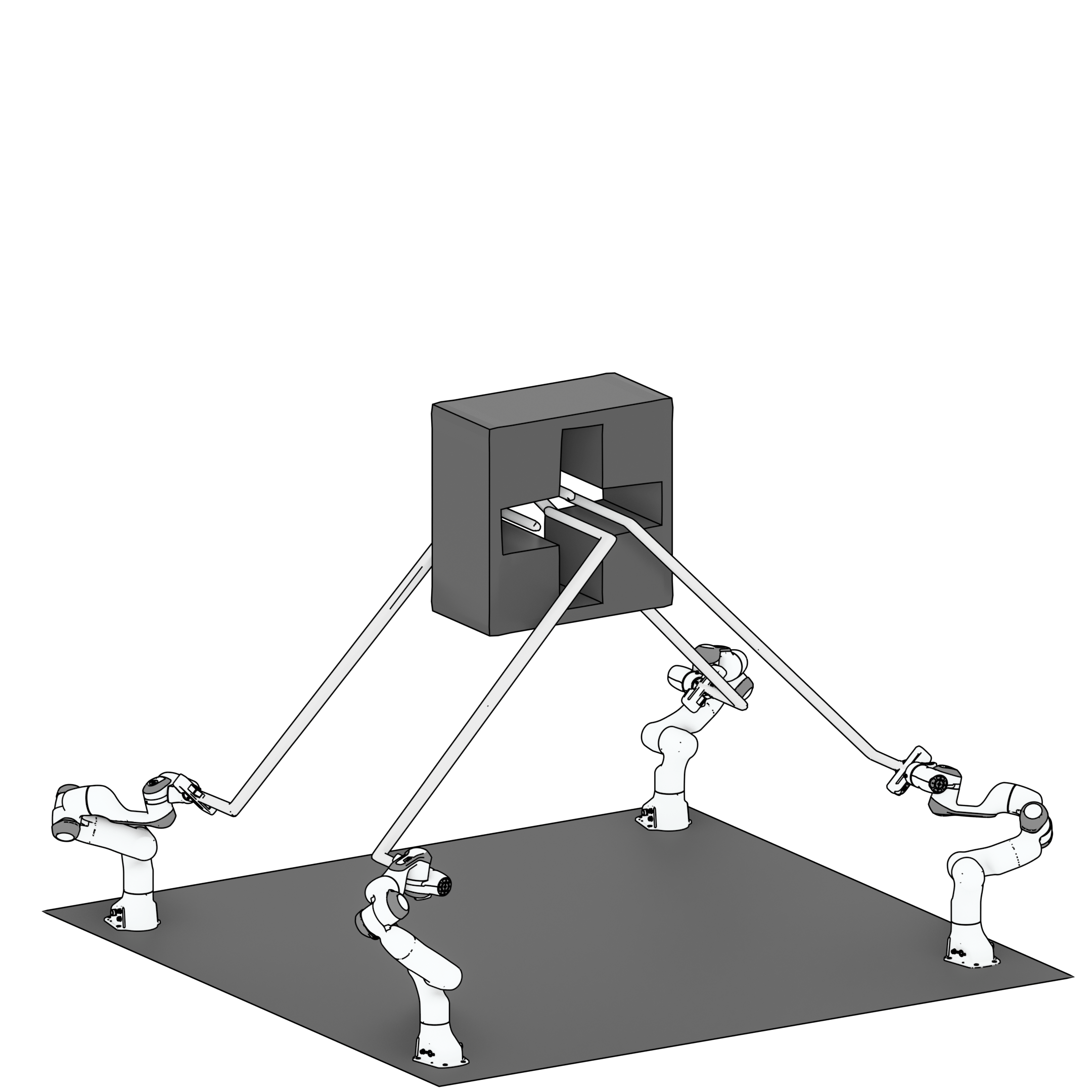}}&dRRT&-&7.926&-&37.143&-&T/O&-&-&67.1\%\\%
&\cellcolor{lightgray}CBSMP&\cellcolor{lightgray}-&\cellcolor{lightgray}0.524&\cellcolor{lightgray}31&\cellcolor{lightgray}13.009&\cellcolor{lightgray}516&\cellcolor{lightgray}T/O&\cellcolor{lightgray}2041&\cellcolor{lightgray}5.5\%&\cellcolor{lightgray}68.9\%\\%
\arrayrulecolor{darkgrey}\cline{2-11}\arrayrulecolor{black}%
&\multirow{4}{*}{\textcolor{mycolor}{\methodname}}&\textcolor{mycolor}{1}&\textcolor{mycolor}{0.561}&\textcolor{mycolor}{6}&\textcolor{mycolor}{5.078}&\textcolor{mycolor}{60}&\textcolor{mycolor}{T/O}&\textcolor{mycolor}{412}&\textcolor{mycolor}{58.6\%}&\textcolor{mycolor}{72.2\%}\\%
&&\textcolor{mycolor}{\cellcolor{lightgray}200}&\textbf{\textcolor{mycolor}{\cellcolor{lightgray}0.075}}&\textcolor{mycolor}{\cellcolor{lightgray}1}&\textcolor{mycolor}{\cellcolor{lightgray}0.621}&\textcolor{mycolor}{\cellcolor{lightgray}2}&\textcolor{mycolor}{\cellcolor{lightgray}28.174}&\textcolor{mycolor}{\cellcolor{lightgray}26}&\textcolor{mycolor}{\cellcolor{lightgray}92.7\%}&\cellcolor{lightgray}82.2\%\\%
&&\textcolor{mycolor}{400}&\textcolor{mycolor}{0.098}&\textcolor{mycolor}{1}&\textcolor{mycolor}{1.326}&\textcolor{mycolor}{3}&\textbf{\textcolor{mycolor}{23.586}}&\textcolor{mycolor}{20}&\textcolor{mycolor}{97.6\%}&\textbf{\textcolor{mycolor}{84.4\%}}\\%
&&\textcolor{mycolor}{\cellcolor{lightgray}400*}&\textcolor{mycolor}{\cellcolor{lightgray}0.083}&\textcolor{mycolor}{\cellcolor{lightgray}1}&\textbf{\textcolor{mycolor}{\cellcolor{lightgray}0.313}}&\textcolor{mycolor}{\cellcolor{lightgray}1}&\cellcolor{lightgray}T/O&\cellcolor{lightgray}17&\cellcolor{lightgray}94.6\%&\cellcolor{lightgray}73.3\%\\%
\cline{1-11}%
\multirow{5}{*}{\includegraphics[width=0.13\linewidth]{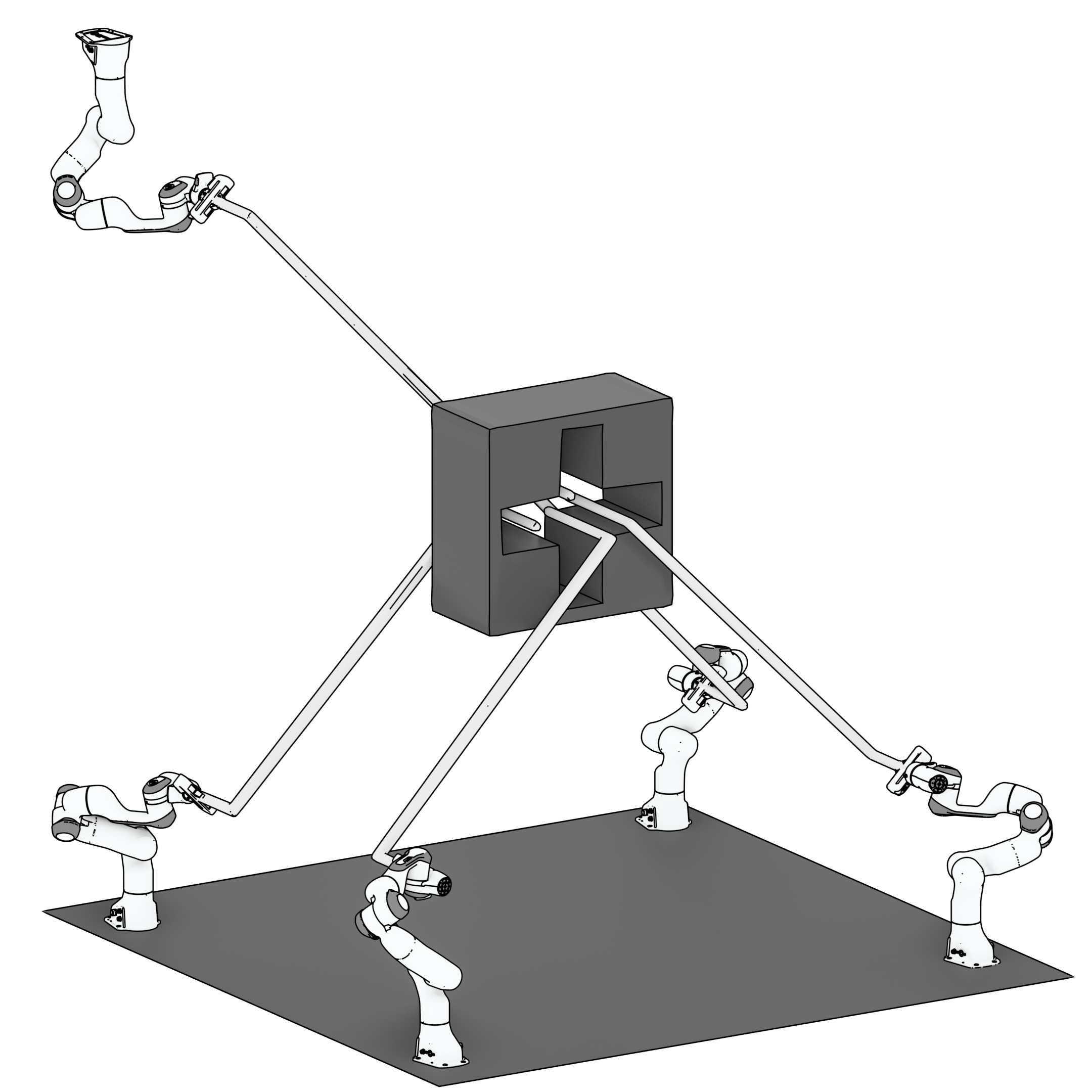}}&dRRT&-&T/O&-&T/O&-&T/O&-&-&-\\%
&\cellcolor{lightgray}CBSMP&\cellcolor{lightgray}-&\cellcolor{lightgray}6.960&\cellcolor{lightgray}274&\cellcolor{lightgray}23.347&\cellcolor{lightgray}745&\cellcolor{lightgray}T/O&\cellcolor{lightgray}1813&\cellcolor{lightgray}4.4\%&\cellcolor{lightgray}\textbf{65.6}\%\\%
\arrayrulecolor{darkgrey}\cline{2-11}\arrayrulecolor{black}%
&\multirow{4}{*}{\textcolor{mycolor}{\methodname}}&\textcolor{mycolor}{1}&\textcolor{mycolor}{54.344}&\textcolor{mycolor}{308}&\textcolor{mycolor}{T/O}&\textcolor{mycolor}{349}&\textcolor{mycolor}{T/O}&\textcolor{mycolor}{385}&\textcolor{mycolor}{53.4\%}&\textcolor{mycolor}{25.6\%}\\%
&&\textcolor{mycolor}{\cellcolor{lightgray}200}&\textcolor{mycolor}{\cellcolor{lightgray}26.911}&\textcolor{mycolor}{\cellcolor{lightgray}11}&\textcolor{mycolor}{\cellcolor{lightgray}T/O}&\textcolor{mycolor}{\cellcolor{lightgray}22}&\textcolor{mycolor}{\cellcolor{lightgray}T/O}&\textcolor{mycolor}{\cellcolor{lightgray}28}&\cellcolor{lightgray}97.7\%&\cellcolor{lightgray}30.0\%\\%
&&\textcolor{mycolor}{400}&\textcolor{mycolor}{28.761}&\textcolor{mycolor}{11}&\textcolor{mycolor}{T/O}&\textcolor{mycolor}{16}&\textcolor{mycolor}{T/O}&\textcolor{mycolor}{19}&\textcolor{mycolor}{96.0\%}&\textcolor{mycolor}{33.3\%}\\%
&&\textcolor{mycolor}{\cellcolor{lightgray}400*}&\textcolor{mycolor}{\cellcolor{lightgray}T/O}&\textcolor{mycolor}{\cellcolor{lightgray}11}&\textcolor{mycolor}{\cellcolor{lightgray}T/O}&\textcolor{mycolor}{\cellcolor{lightgray}14}&\cellcolor{lightgray}T/O&\cellcolor{lightgray}15&\cellcolor{lightgray}97.3\%&\cellcolor{lightgray}20.0\%\\%
\cline{1-11}%
\end{tabularx}%
\caption{We randomly generated 8 start-goal pairs per problem and tested each on three algorithms, running each pair 10 times with a 60s timeout, yielding 80 results per algorithm. We report the first quartile (Q1), median, and third quartile (Q3). Here, \textit{T} is total solve time, \textit{Q} is the number of path sets queried by the high-level planner, and \Weihang{\textit{Coord. ratio} is the ratio between the scheduling time and the total time.} *:As mentioned in~\cref{sec:retry}, we also define a maximum number of attempts before reconstructing the PRMs. By default, this maximum is 10, but for 400* it is increased to 20.}
\label{tab:all_arm}%
\end{table*}

\subsection{Insight and Lessons}

Our experiments show that incorporating feedback and path scheduling significantly improves multi-robot motion planning success rates and computational efficiency. By leveraging path scheduling, robots exploit paths better. \methodname queries fewer paths to find the solution.
\Weihang{CBSMP outperforms our method when the number of arms increases to five in~\cref{sce:scaleup}. We attribute this difference to how the two approaches scale in their scheduling components. CBSMP can struggle in tight environments, such as the 3D Doorway~(\cref{sce:doorway}) and Three-Manipulator problems~(\cref{sec:plus}), because resolving deadlocks often requires substantial expansion of the constraint tree. In contrast, our algorithm handles deadlocks more efficiently through repeated rescheduling and the use of soft constraints. However, in our method, the number of sampled stop actions grows linearly with the total path length across all robots, resulting in increased scheduling overhead as the team size grows. CBSMP’s scheduling effort, by comparison, is largely independent of the number of robots, since it evaluates only a single continuous-time execution without incorporating explicit per-robot stop actions.}

\section{Conclusion}
This paper presented the \expandmethodname (\methodname) algorithm for multi-robot motion planning, demonstrating its effectiveness in manipulator environments with manifold constraints. With iterative path rescheduling and an informed path generation mechanism, \methodname efficiently reduces the complexity of finding collision-free solutions in shared, clustered workspaces with manifold constraints. \methodname balances the search between low-level planners and the high-level scheduler. This novel high-level scheduling allows \methodname to query significantly fewer paths from the low-level planner to find a valid solution. Future work includes automating the selection of rescheduling attempts, exploiting both CPU and GPU parallelism, and \Weihang{extending the method to optimal path planning}.

\section*{Acknowledgements}

 The authors would like to thank Dr. Gavin Britz from Methodist Hospital, Houston, TX, USA, for discussions that led to the definition of the problem considered in this paper.

\ifCLASSOPTIONcaptionsoff
  \newpage
\fi

\printbibliography{}

\end{document}